\newenvironment{proof2}{\par\noindent}{\hfill\BlackBox\\[.3mm]}
\newcommand{\mc}[1]{\ensuremath \mathcal{#1}}
\newcommand{\mb}[1]{\ensuremath \mathbb{#1}}
\DeclarePairedDelimiter{\braces}{\{}{\}}
\DeclarePairedDelimiter{\paren}{\lparen}{\rparen}
\newcommand{\xdim}{d}
\newcommand{\canon}{max-canonical form}
\newcommand{\pzero}{\mc{P}_0}
\newcommand{\pzeroall}{\pzero^{\text{all}}}
\newcommand{\qzero}{{\mathcal{Q}_0}}
\newcommand{\alphamax}{{\alpha^*}}
\newcommand{\ifdir}{``\!\!\Longleftarrow\!\!"}
\newcommand{\onlyifdir}{``\!\!\Longrightarrow\!\!"}
\newcommand{\RR}{\mathbb{R}}
\newcommand{\betavec}{\boldsymbol{\beta}}
\newcommand{\nkernels}{k}
\newcommand{\xset}{\mathcal{X}}
\newcommand{\xsett}{\mathcal{X}_\tau}
\newcommand{\yset}{\mathcal{Y}}
\newcommand{\B}[1]{\mc{B}(\IfEqCase{#1}{
                    {1}{\mb{R}}}
                    [\mb{R}^{#1}])}
\newcommand{\Rd}[1]{(\IfEqCase{#1}{
                    {1}{\mb{R}}}
                    [\mb{R}^{#1}],\B{#1})}
\newcommand{\Salgebra}{\mc{A}}
\newcommand{\cone}{f_1}
\newcommand{\czero}{f_0}
\newcommand{\newcone}{h_1}
\newcommand{\newczero}{h_0}
\newcommand{\mixf}{f}
\newcommand{\mua}{\lambda_0}
\newcommand{\mub}{\nu_0}
\newcommand{\mumax}{\mu_0^{*}}
\newcommand{\alphamaxtau}{{\alpha_{\tau}^{*}}}
\newcommand{\alphamaxmax}{\alpha^{**}}
\newcommand{\taudiv}{\tau_d}
\newcommand{\mumix}{\mu}
\newcommand{\muone}{\mu_1}
\newcommand{\muzero}{\mu_0}
\newcommand{\numix}{\nu}
\newcommand{\nuone}{\nu_1}
\newcommand{\lammix}{\lambda}
\newcommand{\lamone}{\lambda_1}
\newcommand{\nuzero}{\nu_0}
\newcommand{\leb}{\mathbb{L}}
\newcommand{\elkan}{Elkan-Noto}
\newcommand{\pdfratio}{pdf ratio}
\newcommand{\cdfratio}{cdf based}
\newcommand{\AlgName}{AlphaMax}
\newcommand{\mixwt}{w}
\newcommand*{\argmax}{\mathop{\mathrm{argmax}}}
\newcommand{\Aepsilon}{A_\epsilon}
\newcommand{\Bepsilon}{B_\epsilon}
\newcommand{\Tepsilon}{T_\epsilon}
\newcommand{\Tepsiloni}{T_{1\epsilon}}
\begin{document}

\title{Nonparametric semi-supervised learning of class proportions}


\author{\name Shantanu Jain \email shajain@indiana.edu \\
       \addr Department of Computer Science and Informatics\\
       Indiana University\\
       Bloomington, IN 47405, USA
       \AND
       \name Martha White \email martha@indiana.edu \\
       \addr Department of Computer Science and Informatics\\
       Indiana University\\
       Bloomington, IN 47405, USA
       \AND
       \name Michael W. Trosset \email mtrosset@indiana.edu \\
       \addr Department of Statistics\\
       Indiana University\\
       Bloomington, IN 47408, USA
       \AND
       \name Predrag Radivojac \email predrag@indiana.edu \\
       \addr Department of Computer Science and Informatics\\
       Indiana University\\
       Bloomington, IN 47405, USA}

\editor{Leslie Pack Kaelbling}

\maketitle

\begin{abstract}
The problem of developing binary classifiers from positive and unlabeled data is often encountered in machine learning. 
A common requirement in this setting is to approximate posterior probabilities of positive and negative classes for a previously unseen data point. This problem can be decomposed into two steps: (i) the development of accurate predictors that discriminate between positive and unlabeled data, and (ii) the accurate estimation of the prior probabilities of positive and negative examples. In this work we primarily focus on the latter subproblem. We study nonparametric class prior estimation and formulate this problem as an estimation of mixing proportions in two-component mixture models, given a sample from one of the components and another sample from the mixture itself. We show that estimation of mixing proportions is generally ill-defined and propose a canonical form to obtain identifiability while maintaining the flexibility to model any distribution. We use insights from this theory to elucidate the optimization surface of the class priors and propose an algorithm for estimating them. To address the problems of high-dimensional density estimation, we provide practical transformations to low-dimensional spaces that preserve class priors. Finally, we demonstrate the efficacy of our method on univariate and multivariate data.
\end{abstract}

\begin{keywords}
Positive-unlabeled learning, mixtures of distributions, identifiability.
\end{keywords}

\section{Introduction}
\label{sec:intro}

Binary classification is often attempted in situations where the examples from one class greatly outnumber the examples from the other class \citep{Chawla2004}. An extreme case of this scenario occurs when the examples of one class (say, positives) are relatively easy to obtain, while the examples of the other class (say, negatives) are either too expensive or practically impossible to obtain. In such problems we are often presented with data sets containing a relatively small number of positive examples and a relatively large number of unlabeled examples that contain both positive and negative examples at unknown proportions. 

Positive and unlabeled data sets are often observed in the sciences, where the absence of a class designation, even after a failure to label the data point as positive, cannot be interpreted as a negative class label. For example, a protein can be experimentally tested for a particular functionality; e.g., catalytic activity. If confirmed, the data associating a protein with catalytic activity is reliable; however, a failure to confirm catalytic activity may only be due to experimental limitations \citep{Dessimoz2013}. Further compounding the problem, unsuccessful experiments are rarely reported in the literature. Another situation conforming to the open-world assumption occurs in social networks, where a click on the ``like" button on Facebook is a reliable indicator of preference, yet the absence of a ``like" cannot be considered as an indicator of dislike. A simple and important question in all such situations it that of estimating class priors: What is the fraction of all proteins in a given species that are enzymes? or How many Facebook users like a particular product?

The positive-unlabeled data sets do not conform to typical assumptions in machine learning. Traditional supervised algorithms assume the existence of both positive and negative examples, whereas unsupervised algorithms operate without any information on class labels. Even most semi-supervised algorithms assume both positive and negative class labels, in addition to the unlabeled set, and thus require modification. This framework has been studied in the past decade and a half, usually under the names of partially supervised learning and learning from positive and unlabeled data \citep{Liu2002, Denis2005}. Regardless of the problem type, the main goal in all these approaches is to learn classifiers that discriminate between positives and negatives using the available data, ideally by estimating posterior probabilities of class labels given an input example. Here, we are (scientifically) motivated by the problem of estimating class priors: the proportions of positive and negative examples in the unlabeled data, given a set of positive examples and an unlabeled set. 

More formally, we consider the binary classification problem of mapping an input space $\xset$ to an output space $\yset =\{0,1\}$ given i.i.d.~samples of positive and unlabeled examples from fixed but unknown underlying probability distributions. We formulate estimation of the fraction of positive examples in the unlabeled data as parameter learning of two-component mixture models
\begin{equation}
f(x)=\alpha  f_{1}(x) + (1-\alpha)  f_{0}(x), \label{eq_mainproblem}
\end{equation}
\noindent where $x \in \xset$, $f_{1}$ and $f_{0}$ are distributions of the positive and negative data, respectively, and $\alpha \in (0,1)$ is the \emph{mixing proportion}\/ or \emph{class prior}\/ for the positive examples. In the simplest setting, $f_{1}(x)$ and $f_{0}(x)$ correspond to the class-conditional distributions $p(x | y=1)$ and $p(x | y=0)$, respectively. More generally, $f_{1}$ and $f_{0}$ might be any distributions obtained after applying deterministic transformations that preserve the mixing proportions. For example, a function $g:\xset \rightarrow [0,1]$, such as a classifier trained on the labeled vs.\ unlabeled data, could map the original feature vector $x$ to a scalar, resulting in a univariate $f$. We will later discuss class-prior preserving transformations; in the meantime, we refer to $f_{1}$, $f_{0}$, and $f$ as data distributions.

Despite the simplicity of our formulation, we are not aware of any previous attempts to formulate class prior estimation using two-component mixture models. We therefore begin by discussing identifiability conditions for mixing proportions.  Using insights derived from this theory, we then propose algorithms for learning $\alpha$ using maximum-likelihood estimation.  Finally, we conduct experiments on both synthetic and real-world data, obtaining evidence that our methodology is sound and effective.

\section{Identifiability issues and the max-canonical form}
\label{sec:iden}

If the mixture distribution can be written as (\ref{eq_mainproblem}) for more than one choice of $\alpha$, then estimation of $\alpha$ is ill-defined.
In order to proceed, we require
\textit{identifiability}: the existence of a unique mixing proportion. 
In general, mixing proportions are not identifiable. In this section we propose a \canon \ for $\alpha$ that ensures identifiability and leads to a viable estimation algorithm. 


To formalize identifiability of the mixing proportion, we more generally use probability measures to enable discrete, continuous and mixed random variables to be considered. Let $\mumix$, $\muone$ and $\muzero$ be probability measures defined on a $\sigma$-algebra $\Salgebra$ and let $$\mumix(A)=\alpha \muone(A) + (1-\alpha) \muzero(A),$$ for all $A \in \Salgebra$. This expression generalizes the mixture model from \autoref{eq_mainproblem} to probability measures. For example, for a continuous random variable with probability density function $f(x)$, the associated measure for any $A \in \Salgebra$ is $\mu(A) = \int_A f(x) dx$.\footnote{More technically, $\mu(A) = \int_A f(x) \leb(dx)$ for all $A \in \B{\xdim}$, where $\B{\xdim}$ is the Borel $\sigma$-algebra on $\RR^\xdim$ and $\leb$ is the Lebesgue measure on $\RR^\xdim$.} Let now $\pzero$ be an arbitrary family of probability measures defined on $\Salgebra$ such that $\pzero \cap \{\muone \} = \emptyset$. Finally, let us define a family of non-trivial two-component mixtures as 
\begin{align}
& \mc{F}(\pzero,\muone) =  \{ \alpha \muone + (1-\alpha)\muzero : \muzero \in \pzero, \ \alpha \in (0,1)\}  \label{eq:mixMod}.
\end{align}

\noindent For $\mc{F}(\pzero, \muone)$ to be identifiable, there needs to be a one-to-one mapping between the set of parameters $(\alpha, \muzero) \in (0,1) \times \pzero$ and $\mumix \in \mc{F}(\pzero, \muone)$; that is, each $\mumix$ must correspond to a unique set of parameters. We formalize this in the following definition and Lemma \ref{lem:twoIden}. 
%
\begin{definition}[Identifiability]
For any non-empty set of probability measures $\pzero$ and any probability measure $\muone$ on $\Salgebra$ such that $\pzero \cap \{\muone\} = \emptyset$, $\mc{F}(\pzero,\muone)$ is said to be \emph{identifiable} if $\forall a,b \in (0,1)$ and $\forall \mua,\mub \in \pzero$
\begin{align*}
a\muone + (1-a)\mua = b\muone + (1-b)\mub  \ \ \ \implies \ \ \ (a,\mua)=(b,\mub)
.
\end{align*}
Similarly, $\mc{F}(\pzero,\muone)$ is said to be \emph{identifiable in $\alpha$}
if $\forall a,b \in (0,1)$ and $\forall \mua,\mub \in \pzero,$
\begin{align*}
a\muone + (1-a)\mua = b\muone + (1-b)\mub \ \ \ \implies \ \ \  a=b
.
\end{align*}
 \end{definition}
Fortunately, using the following lemma, we can focus on identifiability in terms of the mixing proportion, which is the parameter we are interested in estimating. Therefore, after Lemma \ref{lem:twoIden} we will drop the qualification ``in $\alpha$" when discussing identifiability of $\mc{F}(\pzero,\muone)$
\begin{lemma}\label{lem:twoIden}
$\mc{F}(\pzero,\muone)$ is identifiable if and only if $\mc{F}(\pzero,\muone)$ is identifiable in $\alpha$.
\end{lemma}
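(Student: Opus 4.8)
The plan is to prove the two implications separately. The direction $\onlyifdir$ is immediate: if $\mc{F}(\pzero,\muone)$ is identifiable, then any equality $a\muone + (1-a)\mua = b\muone + (1-b)\mub$ forces the full conclusion $(a,\mua)=(b,\mub)$, which in particular yields $a=b$. Hence identifiability trivially implies identifiability in $\alpha$, and nothing further is needed here.

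For the substantive direction $\ifdir$, I would assume $\mc{F}(\pzero,\muone)$ is identifiable in $\alpha$ and start from an arbitrary equality of mixtures $a\muone + (1-a)\mua = b\muone + (1-b)\mub$ with $a,b\in(0,1)$ and $\mua,\mub\in\pzero$. The hypothesis of identifiability in $\alpha$ immediately gives $a=b$, so the remaining task is only to upgrade this to $\mua=\mub$. Substituting $b=a$ and viewing both sides as set functions on $\Salgebra$, I would evaluate at an arbitrary $A\in\Salgebra$, cancel the common term $a\muone(A)$, and obtain $(1-a)\mua(A)=(1-a)\mub(A)$. Since $a\in(0,1)$ we have $1-a\neq 0$, so dividing gives $\mua(A)=\mub(A)$ for every $A\in\Salgebra$, i.e. $\mua=\mub$. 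Combined with $a=b$ this yields $(a,\mua)=(b,\mub)$, establishing identifiability.

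There is essentially no hard step in this argument; the only points requiring care are bookkeeping. The cancellation of $a\muone(A)$ is legitimate because probability measures are bounded, so all quantities involved are finite and the subtraction is performed pointwise on each measurable set rather than on the measures as abstract objects. The division by $1-a$ is justified precisely by the constraint $\alpha\in(0,1)$ built into the definition of $\mc{F}(\pzero,\muone)$: were $\alpha$ permitted to equal $1$, the negative component would become unconstrained and this cancellation would fail. The lemma is therefore a clean reduction showing that, for this family, pinning down the mixing proportion automatically pins down the negative component, which is exactly what licenses dropping the qualifier ``in $\alpha$'' in the sequel.
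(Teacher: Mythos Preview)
Your proof is correct and follows essentially the same argument as the paper: both directions are handled identically, with the forward implication immediate and the backward one obtained by first using identifiability in $\alpha$ to get $a=b$, then cancelling $a\muone$ and dividing by $1-a\neq 0$ to conclude $\mua=\mub$. Your added remarks about finiteness and the role of $\alpha\in(0,1)$ are sound elaborations but not departures from the paper's approach.
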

\begin{proof}
If $\mc{F}(\pzero,\muone)$ is identifiable, then $\forall a,b \in (0,1)$ and $\forall \mua,\mub \in \pzero$ $$a\muone + (1-a)\mua = b\muone + (1-b)\mub \Rightarrow (a,\mua)=(b,\mub) \Rightarrow a = b$$ and so $\mc{F}(\pzero,\muone)$ is identifiable in $\alpha$.

If $\mc{F}(\pzero,\muone)$ is identifiable in $\alpha$, then for any $a,b \in (0,1)$ and $\mua,\mub \in \pzero$ \text{ where } $a\muone + (1-a)\mua = b\muone + (1-b)\mub$, we know $a = b$. Replacing $b$ with $a$, we obtain
\begin{align*}
a\muone +(1-a)\mua = b\muone + (1-b)\mub \ \ \ 
&\implies  a\muone +(1-a)\mua = a\muone + (1-a)\mub\\
      &\implies \mua=\mub
\end{align*}
Therefore, $(a,\mua)=(b,\mub)$ and, by definition, $\mc{F}(\pzero,\muone)$ is identifiable.
\end{proof}
Unfortunately, in general, $\mc{F}(\pzero,\muone)$ is not identifiable. For $\mumix \in \mc{F}(\pzero,\muone)$, we define the following set
\begin{align}
A(\mumix,\muone,\pzero)=\braces*{\alpha \in (0,1):\mumix=\alpha\muone + (1-\alpha)\muzero, \ \textrm{where}\  \muzero \in \pzero}
.
\end{align}
For $\mc{F}(\pzero,\muone)$ to be identifiable, $A(\mumix,\muone,\pzero)$ must be a singleton for every $\mumix \in \mc{F}(\pzero,\muone)$. In Lemma \ref{lem:Ah}, we demonstrate that this is not the case for $\pzeroall$, the set of all measures on $\Salgebra$ except $\muone$; in fact, we will see that every mixture in $\mc{F}(\pzeroall,\muone)$ corresponds to an entire interval of choices for $\alpha$. We next present Theorem \ref{thm:famQ0}, which enables us to remedy this non-identifiability issue by restricting $\muzero$ to a smaller family, without losing the modeling flexibility of $\mc{F}(\pzeroall,\muone)$. 
We will first discuss the ramifications of this theorem and our proposed max-canonical form; we prove the theorem and required lemmas at the end of the section. Table \ref{tab:notation} provides the summary of notation for quick reference.

\begin{table}[t]
\centering
\caption{A summary of notation.}
\footnotesize
\begin{tabular}{|c|c|}
\hline 
Symbol & Definition\\
\hline 
\hline 
$\mu$, $\mu_{1}$, $\mu_{0}$ & Probability measures: mixture ($\mu$), component one ($\mu_{1}$), component zero ($\mu_{0}$)\\
$f$, $f_{1}$, $f_{0}$ & Probability density functions: mixture ($f$), component one ($f_{1}$), component zero ($f_{0}$)\\
$\alpha$ & Mixing proportion, class prior\\
$\mathcal{A}$ & Sigma algebra\\
$\mathcal{P}_{0}$ & An arbitrary family of distributions from which $\mu_{0}$ is selected.
$\mathcal{P}_{0}\cap\{\mu_{1}\}=\emptyset$\\
$\mathcal{P}_{0}^{\textrm{all}}$ & The family of all distributions, except $\mu_{1}$, on $\mathcal{A}$ from which $\mu_{0}$ is selected. $\mathcal{P}_{0}^{\textrm{all}}\cap\{\mu_{1}\}=\emptyset$\\
$\mathcal{F}(\mathcal{P}_{0},\mu_{1})$ & $\mathcal{F}(\mathcal{P}_{0},\mu_{1})=\left\{ \alpha\mu_{1}+(1-\alpha)\mu_{0} : \mu_{0}\in\mathcal{P}_{0},\:\alpha\in(0,1)\right\} $\\
$\mathcal{Q}_{0}$ & $\mathcal{Q}_{0}=\mathcal{P}_{0}^{\textrm{all}}\setminus\mathcal{F}(\mathcal{P}_{0},\mu_{1})$\\
$A(\mu,\mu_{1},\mathcal{P}_{0})$ & $A(\mu,\mu_{1},\mathcal{P}_{0})=\{\alpha\in(0,1):\mu=\alpha\mu_{1}+(1-\alpha)\mu_{0},\:\textrm{where}\:\mu_{0}\in\mathcal{P}_{0}\}$\\
$R(\mu,\mu_{1})$ & $R(\mu,\mu_{1})=\{\nicefrac{\mu(A)}{\mu_{1}(A)}:A\in\mathcal{A},\:\mu_{1}(A)>0\}$\\
$\alpha^{*}$ & $\alpha^{*}=\inf R(\mu,\mu_{1})$; $A(\mu,\mu_{1},\mathcal{P}_{0}^{\textrm{all}}) = (0, \alpha^{*}]$\\
$\mu_{0}^{*}$ & The particular $\mu_{0}$ when $\alpha = \alpha^{*}$\\
\hline 
\end{tabular}
\normalsize
\label{tab:notation}
\end{table}

\begin{theorem} \label{thm:famQ0}
For $\qzero= \pzeroall \setminus \mc{F}(\pzeroall,\muone)$ the following hold: 
\begin{enumerate}
\item $\mc{F}(\qzero,\muone)$ is identifiable and contains the same mixtures as $\mc{F}(\pzeroall,\muone)$.
\item For any $ \mumix \in \mc{F}(\pzeroall,\muone)$, there exists $\alphamax \in (0,1)$ such that $A(\mumix,\muone,\pzeroall) = (0, \alphamax]$ and there exists $\mumax  \in \qzero$ such that $\mumix = \alphamax \muone  +  (1-\alphamax) \mumax$.
\item For any $\alpha < \alphamax$ with $\muzero$ such that $\mumix = \alpha \muone + (1-\alpha) \muzero$, $\muzero$ is a non-trivial mixture containing $\muone$ and the distance between $\alpha$ and the upper bound $\alphamax$ is smoothly 
related to the proportion of $\muone$ in $\muzero$; that is, $\muzero \in \mc{F}(\pzeroall,\muone)$ and $\alphamax - \alpha =  (1-\alpha) \max A(\muzero,\muone, \pzeroall)$.
\end{enumerate}
\end{theorem}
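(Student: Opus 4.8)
The plan is to reduce all three parts to a single foundational fact about a fixed $\mumix \in \mc{F}(\pzeroall,\muone)$: that $A(\mumix,\muone,\pzeroall) = (0,\alphamax]$ with $\alphamax = \inf R(\mumix,\muone) \in (0,1)$. First I would prove this lemma. Given $\alpha \in (0,1)$, the only candidate zero-component is $\muzero := (\mumix - \alpha\muone)/(1-\alpha)$, which is automatically countably additive with total mass $1$; the sole remaining requirement is nonnegativity, i.e.\ $\mumix(A) \ge \alpha\,\muone(A)$ for every $A \in \Salgebra$. This is automatic when $\muone(A)=0$ and equivalent to $\alpha \le \mumix(A)/\muone(A)$ when $\muone(A)>0$, so $\muzero$ is a probability measure exactly when $\alpha \le \inf R(\mumix,\muone) = \alphamax$; since $\mumix\neq\muone$ (as $\mumix$ is a non-trivial mixture) forces $\muzero\neq\muone$, this gives $A(\mumix,\muone,\pzeroall)=(0,\alphamax]$. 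The endpoints follow from $\mumix$ already lying in the family ($\alphamax>0$) and from $\mumix\neq\muone$, which by taking complements yields some $A$ with $\mumix(A)<\muone(A)$ and hence $\alphamax<1$.

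Next I would prove Part 2. Taking $\alpha=\alphamax$ in the construction above yields $\mumax=(\mumix-\alphamax\muone)/(1-\alphamax)$, a probability measure by the lemma, and by definition $\mumix=\alphamax\muone+(1-\alphamax)\mumax$. The crux is $\mumax\in\qzero$, i.e.\ $\mumax\notin\mc{F}(\pzeroall,\muone)$: if instead $\mumax=\beta\muone+(1-\beta)\nu$ for some $\beta\in(0,1)$ and $\nu\neq\muone$, substituting this into $\mumix=\alphamax\muone+(1-\alphamax)\mumax$ would rewrite $\mumix$ as a non-trivial mixture of $\muone$ and $\nu$ with weight $\alphamax+(1-\alphamax)\beta>\alphamax$, contradicting $A(\mumix,\muone,\pzeroall)=(0,\alphamax]$.

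For Part 3, fix $\alpha<\alphamax$ and its (unique) zero-component $\muzero$. Equating the two representations $\alpha\muone+(1-\alpha)\muzero=\alphamax\muone+(1-\alphamax)\mumax$ and solving for $\muzero$ gives $\muzero=\beta\muone+(1-\beta)\mumax$ with $\beta=(\alphamax-\alpha)/(1-\alpha)\in(0,1)$; since $\mumax\neq\muone$, this shows $\muzero\in\mc{F}(\pzeroall,\muone)$. It remains to identify $\beta$ as $\max A(\muzero,\muone,\pzeroall)$: the displayed decomposition puts $\beta$ in the set, and if some $\beta'>\beta$ also belonged to it, the same mixture-collapse computation as in Part 2 would express $\mumix$ with weight $\alpha+(1-\alpha)\beta'>\alpha+(1-\alpha)\beta=\alphamax$, again contradicting the lemma. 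Rearranging $\beta=(\alphamax-\alpha)/(1-\alpha)$ then yields the stated relation $\alphamax-\alpha=(1-\alpha)\max A(\muzero,\muone,\pzeroall)$.

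Finally, Part 1 follows cheaply. The inclusion $\qzero\subseteq\pzeroall$ gives $\mc{F}(\qzero,\muone)\subseteq\mc{F}(\pzeroall,\muone)$, and Part 2 supplies the reverse inclusion, since every $\mumix$ admits a representation with zero-component $\mumax\in\qzero$. For identifiability, by Lemma \ref{lem:twoIden} it suffices to prove identifiability in $\alpha$: if $\mumix$ has two representations with zero-components in $\qzero$, then by Part 3 any weight $\alpha<\alphamax$ forces its zero-component into $\mc{F}(\pzeroall,\muone)$ and hence out of $\qzero$, so both weights must equal $\alphamax$. The hard part throughout is the recurring maximality / mixture-collapse argument establishing that $\alphamax$ is not just an infimum but an attained, strictly maximal weight whose companion $\mumax$ admits no further extraction of $\muone$; the measure-theoretic companion obstacle is checking that the subtractively defined $\muzero$ is genuinely a nonnegative countably additive probability measure, which is exactly where the infimum definition of $\alphamax$ is used.
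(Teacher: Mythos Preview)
Your proposal is correct and closely parallels the paper's own proof. In particular, your foundational lemma that $A(\mumix,\muone,\pzeroall)=(0,\alphamax]$ with $\alphamax=\inf R(\mumix,\muone)\in(0,1)$ is exactly the paper's Lemma~\ref{lem:Ah}, proved by the same nonnegativity analysis of $(\mumix-\alpha\muone)/(1-\alpha)$; your proof of Part~2 via the mixture-collapse contradiction matches the paper's; and your Part~3 argument (equate the two representations, solve for $\muzero$, then bound the maximal proportion by contradiction) is essentially the paper's as well.

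The one genuine methodological difference is how you handle identifiability in Part~1. The paper isolates a general abstract criterion as a separate lemma (Lemma~\ref{lem:iden}): $\mc{F}(\pzero,\muone)$ is identifiable if and only if $\mc{F}(\pzero,\muone)\cap\pzero=\emptyset$. It then observes that $\qzero\cap\mc{F}(\qzero,\muone)\subseteq\qzero\cap\mc{F}(\pzeroall,\muone)=\emptyset$ by construction of $\qzero$, and concludes. You instead bypass this lemma entirely and argue directly from Part~3: any weight strictly below $\alphamax$ forces its zero-component into $\mc{F}(\pzeroall,\muone)$ and hence out of $\qzero$, so the only admissible weight is $\alphamax$, giving identifiability in $\alpha$ and then full identifiability via Lemma~\ref{lem:twoIden}. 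Your route is more self-contained and avoids introducing an extra lemma; the paper's route buys a reusable characterization of identifiability that applies to any restriction family $\pzero$, not just $\qzero$. Both are valid and of comparable length.
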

Interestingly, the two families $\mc{F}(\qzero,\muone)$ and $\mc{F}(\pzeroall,\muone)$ contain the same mixtures, yet $\mc{F}(\qzero,\muone)$ is identifiable and $\mc{F}(\pzeroall,\muone)$ is not. 
Importantly, \autoref{thm:famQ0} suggests a canonical form for the estimation of the mixing proportion that ensures identifiability by selecting $\alphamax = \max A(\mumix,\muone,\pzeroall)$. With this \textit{\canon}, estimation is implicitly restricted to the identifiable set $\mc{F}(\qzero,\muone)$, while maintaining the ability to model any mixture in $\mc{F}(\pzeroall,\muone)$. This canonical form is intuitive in that it prefers $\muzero$ that is not composed of $\muone$. The reason for the lack of identifiability is that we can always shift some portion of $\muone$ into $\muzero$ until all weight is on $\muzero$ (i.e., $\alpha = 0$). Therefore, choosing the maximum $\alpha$ selects the most separated $\muzero$ and $\muone$. Moreover, statement 3 indicates that even if in practice $\muzero$ really does have a small proportion of $\muone$, the over-estimate of $\alpha$ smoothly relates to this small proportion.


\begin{figure}[]
  \centering
  \includegraphics[scale=1.2]{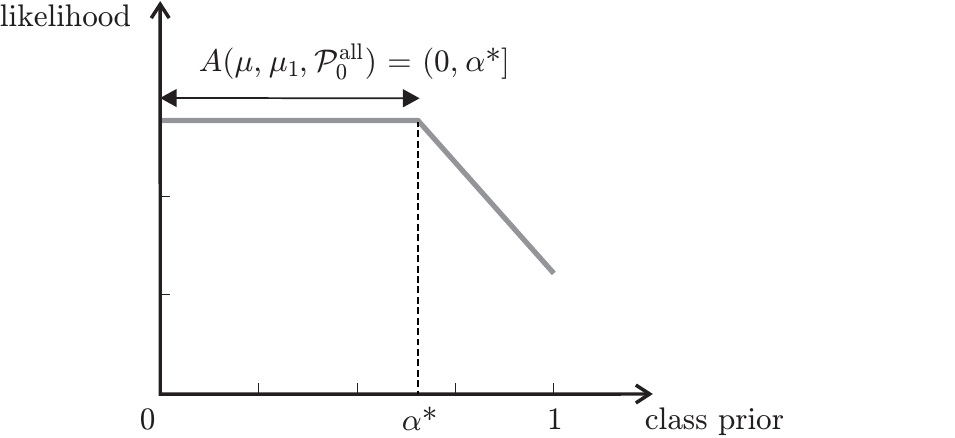}
\caption{Stylized depiction of the likelihood as a function of the mixing proportion. The true mixing proportion is not identifiable and lies in the region $A(\mumix,\muone,\pzeroall)$. The end of the interval $\alphamax$ is identifiable and preserves representation of $\mc{F}(\pzeroall,\muone)$. A procedure estimating the likelihood is expected to show flat likelihood for any $\alpha \in A(\mumix,\muone,\pzeroall)$ and then decrease after $\alphamax$.
  }
  \label{fig:alpha}
\end{figure}

\citet{Blanchard2010} give an identifiability result that captures several aspects of Theorem \ref{thm:famQ0}; precisely, in identifying $\qzero$'s role towards identifiability and establishing $\alphamax$ as the maximum mixing proportion. Our theorem additionally shows that there is no loss in the modeling flexibility by restricting to $\qzero$ and that the set of all valid mixing proportions is actually the interval $(0,\alphamax]$. This interval is significant because it directly informs our algorithm development in Section \ref{sec:algo}. Our theorem also quantifies the error of $\alphamax$ when $\muzero \notin \qzero$. 

Non-identifiability of the mixing proportion suggests an interval of equally likely solutions using inference techniques such as maximum likelihood, where $\muone$ and $\mumix$ are approximated using (integrals of) density or mass functions. The approximation is expected to progressively deteriorate as the mixing proportion increases beyond $\alphamax$. Figure \ref{fig:alpha} illustrates the expected relationship between a likelihood function and the mixing proportion. 

\subsection{Proof of Theorem \ref{thm:famQ0}}

We first prove the following two lemmas. 
%

\begin{lemma}
\label{lem:Ah}
For any $\mumix \in \mc{F}(\pzeroall,\muone)$, let
$$R(\mumix,\muone)=\braces*{\nicefrac{\mumix(A)}{\muone(A)}: A \in \Salgebra, \ \muone(A)>0}.$$
Then, there exists 
\begin{align*}
\alphamax=\inf R(\mumix,\muone) \in (0, 1) \hspace{1.0cm} \text{ and } \hspace{1.0cm} A(\mumix,\muone,\pzeroall) = (0,\alphamax]
.
\end{align*}
Hence, $\mc{F}(\pzeroall,\muone)$ is nonidentifiable. Moreover, $\alphamax$ can be defined in terms of the densities $\mixf,\cone$ corresponding to $\mumix,\muone$, if they exist; precisely, $\alphamax = \inf \braces*{\nicefrac{\mixf(x)}{\cone(x)}: x \in \xset, \ \cone(x)>0}.$

\end{lemma}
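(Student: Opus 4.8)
The plan is to pin down $\alphamax = \inf R(\mumix,\muone)$ by two matching bounds and then to identify $A(\mumix,\muone,\pzeroall)$ with the interval $(0,\alphamax]$ through an explicit ``subtraction'' construction. The governing observation is that for a candidate proportion $\beta$, the difference $\mumix - \beta\muone$ is a genuine (nonnegative) measure precisely when $\beta \le \nicefrac{\mumix(A)}{\muone(A)}$ for every $A$ with $\muone(A)>0$, i.e.\ precisely when $\beta \le \inf R(\mumix,\muone) = \alphamax$. Thus $\alphamax$ is exactly the largest amount of $\muone$ that can be removed from $\mumix$ while leaving a valid component, which is what makes the entire interval of proportions admissible.

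First I would establish $\alphamax \in (0,1)$. Writing $\mumix = \alpha\muone + (1-\alpha)\muzero$ for some witnessing $\alpha \in (0,1)$, $\muzero \in \pzeroall$, for every $A$ with $\muone(A)>0$ we have $\nicefrac{\mumix(A)}{\muone(A)} = \alpha + (1-\alpha)\nicefrac{\muzero(A)}{\muone(A)} \ge \alpha > 0$, so $\alphamax \ge \alpha > 0$. For the upper bound, $\muzero \ne \muone$ forces a set $A$ with $\muzero(A) < \muone(A)$ (otherwise complementation would give $\muzero = \muone$); such an $A$ automatically has $\muone(A)>0$ and yields a ratio strictly below $1$, so $\alphamax < 1$. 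In passing this shows $\mumix \ne \muone$, which I reuse below.

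Next I would prove the two inclusions. For $A(\mumix,\muone,\pzeroall) \subseteq (0,\alphamax]$: any $\beta$ in this set, with a representation $\mumix = \beta\muone + (1-\beta)\nuzero$, gives $\nicefrac{\mumix(A)}{\muone(A)} \ge \beta$ on every $A$ with $\muone(A)>0$, so $\beta \le \inf R = \alphamax$. For the reverse inclusion I would, given $\beta \in (0,\alphamax]$, define $\nuzero = (1-\beta)^{-1}(\mumix - \beta\muone)$ and check three things: it has total mass $1$; it is nonnegative, since for $\muone(A)>0$ we get $\mumix(A) \ge \alphamax\,\muone(A) \ge \beta\,\muone(A)$ and for $\muone(A)=0$ nonnegativity is immediate, so $\nuzero$ is a bona fide probability measure (countable additivity is inherited from the linear combination); and $\nuzero \ne \muone$, because $\nuzero = \muone$ would force $\mumix = \muone$, contradicting $\mumix \ne \muone$. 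Hence $\beta \in A(\mumix,\muone,\pzeroall)$, the two inclusions give $A(\mumix,\muone,\pzeroall) = (0,\alphamax]$, and since this interval is not a singleton, $\mc{F}(\pzeroall,\muone)$ fails identifiability in $\alpha$ and therefore, by Lemma \ref{lem:twoIden}, fails identifiability.

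Finally I would translate $\alphamax$ into density language when $\mumix,\muone$ admit densities $\mixf,\cone$ against a common dominating measure. The two ratio bounds transfer directly: on $\{\cone>0\}$ the a.e.\ lower bound $\mixf \ge c\,\cone$ integrates to $\mumix(A) \ge c\,\muone(A)$, giving $\alphamax \ge \inf \nicefrac{\mixf}{\cone}$, while for the reverse one takes the level sets $A_\epsilon = \{\cone>0,\ \mixf < (\inf \nicefrac{\mixf}{\cone}+\epsilon)\cone\}$, which have positive $\muone$-measure and force $\nicefrac{\mumix(A_\epsilon)}{\muone(A_\epsilon)}$ below $\inf\nicefrac{\mixf}{\cone}+\epsilon$. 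I expect the delicate point to be exactly here: the honest identity is with the \emph{essential} infimum of $\nicefrac{\mixf}{\cone}$ over $\{\cone>0\}$ (a pointwise infimum attained only on a null set would be invisible to $\muone$), so establishing that $A_\epsilon$ has positive measure — rather than merely being nonempty — is the step that needs the essential-infimum formulation. The abstract set equality $A(\mumix,\muone,\pzeroall)=(0,\alphamax]$ is comparatively clean once the ``subtraction'' viewpoint is adopted; the measure-theoretic care in the density reformulation is where I would spend the most effort.
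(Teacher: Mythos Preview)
Your argument is correct and tracks the paper's proof closely on the measure-theoretic core: the bound $\alphamax \ge \alpha > 0$, the two inclusions for $A(\mumix,\muone,\pzeroall)$, and the ``subtraction'' construction $\nuzero = (1-\beta)^{-1}(\mumix-\beta\muone)$ are handled the same way. Your argument for $\alphamax<1$ (exhibit $A$ with $\muzero(A)<\muone(A)$ directly from $\muzero\ne\muone$, giving a ratio $<1$) is slightly more direct than the paper's, which instead supposes $\alphamax\ge1$, deduces $\mumix(A)\ge\muone(A)$ for all $A$, and derives a total-mass contradiction from $\mumix\ne\muone$.

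The one genuine difference is in the density reformulation. The paper inserts an intermediate step (its Part~4): it first shows that $(\mixf-\alpha\cone)/(1-\alpha)$ is a valid density if and only if $\alpha\in(0,\alphamax]$, by translating back and forth between densities and measures and invoking the already-established interval identity, and then proves $\alphamax=\inf_{\cone>0}\mixf/\cone$ by a two-case contradiction against that characterization. Your route is more direct: integrate the pointwise bound to get $\alphamax\ge\inf\mixf/\cone$, and use the level sets $A_\epsilon$ to get the reverse inequality. Your caveat about the \emph{essential} infimum is well taken and in fact sharper than the paper's treatment: the paper's Part~5 argues that a single point $x$ with $\mixf(x)<\alphamax\cone(x)$ already invalidates $(\mixf-\alphamax\cone)/(1-\alphamax)$ as a density, tacitly treating densities pointwise rather than a.e.; your essential-infimum formulation, and your insistence that $A_\epsilon$ carry positive $\muone$-mass rather than merely be nonempty, is exactly the honest version of the claim.
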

\begin{proof}

\noindent
\textbf{Part 1:}
First, we show that $\alphamax$ is well defined and it is in $(0,1)$. $R(\mumix,\muone)$ is non-empty, because there exists $A\in \Salgebra$ with $\muone(A)>0$; thus, $\alphamax$ is well defined. Now, because $\mumix\in \mc{F}(\pzeroall,\muone)$, there exists $b \in (0,1)$ and $\mua\in \pzeroall$ such that $\mumix=b\muone + (1-b)\mua$. For any $A \in \Salgebra$ such that $\muone(A) > 0$,
\begin{align*}
    \frac{\mumix(A)}{\muone(A)} &= \frac{b\muone(A) + (1-b)\mua(A)}{\muone(A)} 
    =  b+ (1-b)\frac{\mua(A)}{\muone(A)}
    \geq b 
\end{align*}
giving $\alphamax \ge b > 0$.
To show that $\alphamax < 1$,
suppose there exists $\mumix$ and $\muone$ such 
that $\alphamax \geq 1$. Then because $\alphamax$ is the infimum of $R(\mumix,\muone)$
$$\frac{\mumix(A)}{\muone(A)} \geq 1 \text{ when $\muone(A)  > 0$}.$$
Moreover, when $\muone(A) = 0$, clearly $\mumix(A) \ge \muone(A)$. 
Therefore, $\mumix(A) \ge \muone(A)$ for all $A \in \Salgebra$.
Now because $\muone \notin \mc{F}(\pzeroall,\muone)$,  $\mumix \neq \muone$
and so there exists $B \in \Salgebra$ such that $\mumix(B) > \muone(B)$.
This leads to a contradiction as follows:
\begin{align*}
1 = \mumix(\xset)  &= \mumix(\xset \setminus B) + \mumix(B)\\
&> \muone(\xset \setminus B) + \muone(B)\\
&= 1
\end{align*}
Thus $\alphamax < 1$ and $\alphamax \in (0,1)$. 
   
\textbf{Part 2:} Second, we show that $A(\mumix,\muone,\pzeroall) \supseteq (0,\alphamax]$.
Recall that $A(\mumix,\muone,\pzeroall) = \braces*{\alpha \in (0,1):\mumix=\alpha\muone + (1-\alpha)\muzero, \ \textrm{where}\  \muzero \in \pzeroall}$.
We need to show that if $a \in (0, \alphamax]$, then there exists $\mua \in \pzeroall$ such that $\mumix=a\muone + (1-a)\mua$. Consider
$$\mua = \frac{\mumix -a \muone}{1 -a}.$$
Because $a \leq \alphamax <1$, $\mua$ is well defined. Moreover, $\mua \neq \muone$ because equality would imply $\mumix = \muone$ (trivial mixture). If we can show $\mua$ is a probability measure, then $\mua\in \pzeroall$ and Part 2 is complete.

$\mua(A) \ge 0 \ \forall A \in \Salgebra$: By definition of $\alphamax$, for all $A \in \Salgebra$ with $\muone(A)> 0$, we have $\mumix(A) \ge \alphamax \muone(A) \ge a \muone(A)$ because $a \in (0, \alphamax]$. The inequality is trivial when $\muone(A)=0$. Thus, $\mumix(A)-a \muone(A) \ge 0$ for all $A \in \Salgebra$; consequently, $\mua(A) \ge 0.$

$\mathbf{\mua(\xset) = 1}$:
Because $\mumix(\xset) = 1$ and $\muone(\xset) = 1$,
\begin{align*}
\mua(\xset) = \frac{\mumix(\xset) - a\muone(\xset)}{1-a} = \frac{1-a}{1-a} = 1
. 
\end{align*} 
Similarly, $\mua(\emptyset) = 0$ and $\mua(\cup A_i) = \sum_i \mua(A_i)$.
Therefore, $\mua$ is a probability measure.

\textbf{Part 3:} Third, we show that $A(\mumix,\muone,\pzeroall) \subseteq (0,\alphamax]$. 
Take any $a \in A(\mumix,\muone,\pzeroall)$ and corresponding 
$\mua \in \pzeroall$ such that $\mumix=a\muone + (1-a)\mua$.
For any $A \in \Salgebra$, we know $(1-a)\mua(A) \ge 0$
and so $\mumix(A)\geq a\muone(A)$.
Thus, for all $A \in \Salgebra$ with $\muone(A) > 0$, $\nicefrac{\mumix(A)}{\muone(A)} \geq a$ and consequently $a\leq \alphamax$. 

\vspace{0.2cm}
\noindent
In summary, $A(\mumix,\muone,\pzeroall)=(0,\alphamax]$ and it is not a singleton set for any $\mumix$. Therefore, we conclude that $\mc{F}(\pzeroall,\muone)$ is not identifiable.

\textbf{Part 4:} Fourth, we show that 
$$\czero= \frac{\mixf-\alpha \cone}{1-\alpha}$$
is a valid probability density, if and only if $\alpha \in (0, \alphamax]$

$\ifdir$ Any $\alpha \in A(\mumix,\muone,\pzeroall)=(0,\alphamax]$ is a valid mixture; i.e., there exists $\muzero \in \pzeroall$ such that $\mumix = \alpha \muone +(1-\alpha) \muzero$. Because $\muzero$ is a well defined measure and it can be expressed as $\muzero =\nicefrac{(\mumix-\alpha \muone)}{(1-\alpha)}$, its probability density, $\newczero$, can be defined in terms of the densities corresponding to $\mumix$ and $\muone$ as follows: $\newczero=\nicefrac{(\mixf - \alpha \cone)}{(1-\alpha)}$. Thus $\czero = \newczero$ and it is a well defined probability density.

$\onlyifdir$ We give a proof by contradiction. Suppose $\czero$ is a well defined probability density for some $\alpha > \alphamax$. Then $\muzero$, the probability measure corresponding $\czero$, can be expressed in terms of probability measures corresponding to densities $\mixf$ and $\cone$ as follows: $\muzero=\nicefrac{(\mumix-\alpha \muone)}{(1-\alpha)}$; consequently, $\mumix = \alpha \muone +(1-\alpha) \muzero$. Thus $\alpha$ is a valid mixing proportion, i.e., $\alpha \in A(\mumix,\muone,\pzeroall)=(0,\alphamax]$, which gives the contradiction. 

\textbf{Part 5:} Next, we show that $\alphamax = \inf \braces*{\nicefrac{\mixf(x)}{\cone(x)}: x \in \xset, \ \cone(x)>0}$. We give a proof by contradiction. Suppose $\alphamaxmax = \inf \braces*{\nicefrac{\mixf(x)}{\cone(x)}: x \in \xset, \ \cone(x)>0} \neq \alphamax$.

If $\alphamaxmax < \alphamax$: Using the definition of $\alphamaxmax$, there exists $x \in \xset$ with $\cone(x)>0$ and $\nicefrac{\mixf(x)}{\cone(x)}<\alphamax$. Thus $\mixf(x) -\alphamax \cone(x)<0$ and $\nicefrac{(\mixf-\alphamax \cone)}{(1-\alphamax)}$ is not a probability density function, which contradicts part 4.

If $\alphamaxmax > \alphamax$: Using the definition of $\alphamaxmax$, $\alphamaxmax \leq   \nicefrac{\mixf(x)}{\cone(x)}$ when $\cone(x) > 0$. Thus $\mixf(x)-\alphamaxmax \cone(x)\geq 0$ when $\cone(x) > 0$. The inequality is trivially true when $\cone(x)=0$.  Consequently $\czero(x)=\nicefrac{\paren*{\mixf(x)-\alphamaxmax \cone(x)}}{(1-\alphamaxmax)} \geq 0$ for all $x \in \xset$. Moreover, $\int_\xset \czero(x) dx =1$. Thus $\czero$ is a well defined probability density function, which contradicts part 4. \\
Thus $\alphamaxmax=\alphamax$.
\end{proof}
%
%
\begin{lemma}
		\label{lem:iden}			$\mc{F}(\pzero,\muone)$ is identifiable if and only if $\mc{F}(\pzero,\muone) \cap \pzero=\emptyset.$
\end{lemma}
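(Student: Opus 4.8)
The plan is to first invoke Lemma~\ref{lem:twoIden} to reduce the claim to identifiability \emph{in $\alpha$}, and then prove each direction of the biconditional by its contrapositive. Throughout, the hypothesis $\pzero \cap \{\muone\} = \emptyset$ is what rules out trivial mixtures.

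For $\ifdir$ I would prove the contrapositive: if $\mc{F}(\pzero,\muone)$ is not identifiable in $\alpha$, then $\mc{F}(\pzero,\muone) \cap \pzero \neq \emptyset$. Non-identifiability supplies $a, b \in (0,1)$ with $a \neq b$ (say $a > b$) and $\mua, \mub \in \pzero$ satisfying $a\muone + (1-a)\mua = b\muone + (1-b)\mub$. Solving this identity for $\mub$ gives $\mub = \frac{a-b}{1-b}\muone + \frac{1-a}{1-b}\mua$. The only thing to check is that $\gamma := \frac{a-b}{1-b}$ lies in $(0,1)$ --- which follows from $0 < b < a < 1$ --- and that the two coefficients sum to one, so that $\mub = \gamma\muone + (1-\gamma)\mua$ exhibits $\mub$ as a genuine element of $\mc{F}(\pzero,\muone)$. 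Since $\mub \in \pzero$ as well, the intersection is nonempty.

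For $\onlyifdir$ I would likewise argue by contrapositive: if $\mc{F}(\pzero,\muone) \cap \pzero \neq \emptyset$, then identifiability in $\alpha$ fails. Choosing $\muzero' \in \mc{F}(\pzero,\muone) \cap \pzero$, its membership in $\mc{F}(\pzero,\muone)$ yields $\beta \in (0,1)$ and $\muzero'' \in \pzero$ with $\muzero' = \beta\muone + (1-\beta)\muzero''$. Now form any mixture $\mumix = \alpha\muone + (1-\alpha)\muzero'$ with $\alpha \in (0,1)$ and substitute the expression for $\muzero'$; collecting the two contributions of $\muone$ collapses $\mumix$ into a second representation $\mumix = \alpha'\muone + (1-\alpha')\muzero''$ with $\alpha' = \alpha + (1-\alpha)\beta$ and $\muzero'' \in \pzero$. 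The key verification is that $\alpha' \in (0,1)$ and $\alpha' > \alpha$ (both immediate from $\alpha, \beta \in (0,1)$), so that $\mumix$ has two distinct valid mixing proportions over $\pzero$, contradicting identifiability in $\alpha$.

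I do not expect a genuine conceptual obstacle here; the entire argument is a pair of algebraic rearrangements. The only real care needed is the bookkeeping that confirms every convex-combination coefficient lies strictly in $(0,1)$ and that $\pzero \cap \{\muone\} = \emptyset$ forecloses the degenerate cases. The mild conceptual point worth flagging is that a nonempty intersection is precisely the mechanism by which mass of $\muone$ can be folded into $\muzero$, which is exactly the source of non-identifiability described informally around Theorem~\ref{thm:famQ0}.
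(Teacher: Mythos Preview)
Your proposal is correct and follows essentially the same route as the paper: both directions are argued by contradiction/contrapositive via the same algebraic rearrangement, solving for the ``hidden'' mixture representation of an element of $\pzero$. The only cosmetic differences are that you invoke Lemma~\ref{lem:twoIden} explicitly up front (the paper relies on the standing convention that identifiability means identifiability in $\alpha$) and that in the $\onlyifdir$ direction you start from an arbitrary $\alpha \in (0,1)$ and compute $\alpha'$, whereas the paper fixes $a \in (c,1)$ and back-solves for $b$ --- the resulting pair of distinct mixing proportions is the same either way.
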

\begin{proof}
        $\ifdir$
        We give a proof by contradiction. 
        Suppose $\mc{F}(\pzero,\muone) \cap \pzero=\emptyset$, but $\mc{F}(\pzero,\muone)$ is not identifiable. 
        Therefore, there exists $a, b \in (0,1)$ and $\mua ,\mub \in \mathcal{P}_0$ such that  $a \muone + (1-a)\mua = b \muone + (1-b)  \mub$, but $a\neq b$. 
        Without loss of generality we can assume $a>b$. We now have
		\begin{align*}
			a \muone &+ (1-a)\mua = b \muone + (1-b)  \mub \\
			&\Rightarrow \mub = \nicefrac{(a-b)}{(1-b)}\muone + \nicefrac{(1-a)}{(1-b)}\mua = \nicefrac{(a-b)}{(1-b)}\muone + \paren*{1-\nicefrac{(a-b)}{(1-b)}}\mua
		\end{align*}
		Because $\nicefrac{(a-b)}{(1-b)} \in (0,1)$ and $\mua \in \pzero$, it follows that $\mub \in \mc{F}(\pzero,\muone)$. Moreover, $\mub$ was picked from $\pzero$. 
		Therefore, $\mc{F}(\pzero,\muone)\cap \pzero$ contains $\mub$ and so is not empty,
		which is a contradiction. \\
		   $\onlyifdir$
        Again, we give a proof by contradiction. Suppose $\mc{F}(\pzero,\muone)$ is identifiable, 
        but $\mc{F}(\pzero,\muone) \cap \pzero \neq \emptyset$. Let $\mua \in \mc{F}(\pzero,\muone) \cap \pzero$. Because $\mua \in \mc{F}(\pzero,\muone)$, there exists $\mub \in \pzero$ and $c \in (0,1)$ such that $\mua =c\muone + (1-c)\mub$. Let $a \in (c,1)$ and $b=\nicefrac{(a-c)}{(1-c)}$. As $a,b \in (0,1)$ and $\mub,\mua \in \pzero$, it follows that $a\muone +(1-a)\mub,  b\muone + (1-b)\mua \in \mc{F}(\pzero,\muone)$. First, we can see that $a\muone +(1-a)\mub = b\muone + (1-b)\mua$ because
        \begin{align*}
            b\muone + (1-b)\mua
            &= b\muone + (1-b)(c\muone+(1-c)\mub)\\
            &= \paren*{b+(1-b)c}\muone + (1-b)(1-c)\mub\\
            &= \paren*{b(1-c)+c}\muone + \paren*{1-b(1-c)-c}\mub\\
            &= a\muone + (1-a)\mub &&\triangleright a=b(1-c)+c 
            .
        \end{align*}
Because $a=b(1-c) +c$ is a convex combination of $b$ and $1$, we know $a \in (b,1)$, giving $a>b$ and so $a \neq b$. Therefore $a\muone +(1-a)\mub = b\muone + (1-b)\mua$, but $(a,\mub)\neq (b,\mua)$. It follows $\mc{F}(\pzero,\muone)$ is not identifiable, which is a contradiction.
\end{proof}

%
\begin{proof2}[\textbf{Proof of Theorem 1:}]
\textbf{To prove Statement 2} we use Lemma \ref{lem:Ah}. 
For any $\mumix \in \mc{F}(\pzeroall,\muone)$,
 we know there exists an $\alphamax \in (0,1)$ such that $A(\mumix,\muone,\pzeroall) = (0,\alphamax]$.
 This means that $\alphamax$ is a valid mixing proportion for mixture $\mumix$. Thus, there exists $\mumax \in \pzeroall$ such that $\mumix = \alphamax \muone + (1-\alphamax)\mumax$. To show that $\mumax \in \qzero$, we need to show that $\mumax \notin \mc{F}(\pzeroall,\muone)$. 
Using a proof by contradiction, assume $\mumax \in \mc{F}(\pzeroall,\muone)$.
For some $a \in (0,1)$ and $\mua \in \pzeroall$, $\mumax=a\muone + (1-a) \mua$, giving  
\begin{align*}
\mumix&=\alphamax \muone + (1-\alphamax) (a\muone + (1-a) \mua)\\
&=\left(\alphamax+ (1 -\alphamax)a\right)\muone + (1-\alphamax)(1-a)\mub\\
&=\left(\alphamax+ (1 -\alphamax)a\right)\muone + (1-(\alphamax+ (1 -\alphamax)a))\mua \hspace{1cm}\triangleright \alpha = \alphamax+ (1 -\alphamax)a
\end{align*}
Therefore, $\alpha \in A(\mumix,\muone,\pzeroall)$ but $\alpha = \alphamax + (1-\alphamax)a > \alphamax$, 
which is a contradiction. 

\noindent
\textbf{To prove Statement 1} we use Lemma \ref{lem:iden}.
\begin{align*}
\qzero \subseteq \pzeroall 
&\Rightarrow \mc{F}(\qzero,\muone) \subseteq \mc{F}(\pzeroall,\muone)\\
&\Rightarrow \qzero \cap \mc{F}(\qzero,\muone) \subseteq  \qzero \cap \mc{F}(\pzeroall,\muone) 
= \emptyset.
\end{align*}
Therefore, $\qzero \cap \mc{F}(\qzero,\muone)=\emptyset$ and so by Lemma \ref{lem:iden}, $\mc{F}(\qzero,\muone)$ is identifiable.
 
To prove that $\mc{F}(\qzero,\muone)$ and $\mc{F}(\pzeroall,\muone)$ contain the same set of mixtures, we need to show that $\mc{F}(\qzero,\muone) \subseteq \mc{F}(\pzeroall,\muone)$ and $\mc{F}(\pzeroall,\muone) \subseteq \mc{F}(\qzero,\muone)$, where here we use $\subseteq$ to mean contains a subset of the same unique probability measures. It is already clear that $\mc{F}(\qzero,\muone) \subseteq \mc{F}(\pzeroall,\muone)$ and $\mc{F}(\pzeroall,\muone) \subseteq \mc{F}(\qzero,\muone)$ follows from Statement 2, because any $\mumix \in \mc{F}(\pzeroall,\muone)$ can be represented as a mixture of $\muone$ and some $\mumax \in \qzero$ with $\alphamax$ giving the mixing proportion.

\noindent
\textbf{To prove Statement 3} 
From statement 2, for some $\mumax \in \qzero$,
\begin{align*}
&\alpha\muone+ (1-\alpha) \muzero = \alphamax \muone + (1-\alphamax)\mumax\\
& \Rightarrow \muzero = \nicefrac{(\alphamax-\alpha)}{(1-\alpha)}\muone + \paren*{1-\nicefrac{(\alphamax-\alpha)}{(1-\alpha)}}\mumax.
\end{align*}
Because $\nicefrac{(\alphamax-\alpha)}{(1-\alpha)}\in (0,1)$, $\muzero \in \mc{F}(\pzeroall,\muone)$. Let $a =  \max A(\muzero,\muone, \pzeroall)$ the maximum
proportion of $\muone$ in $\muzero$, with corresponding $\mua \in \pzeroall$ 
such that $\muzero = a \muone + (1-a) \mua$.  
Then, 
\begin{align*}
\mumix &=\alpha \muone + (1-\alpha) \muzero\\
&=\alpha \muone + (1-\alpha) (a \muone+(1-a)\mua)\\ 
&=(\alpha + a -\alpha a) \muone + (1-\alpha -a + \alpha a)\mua
.
\end{align*}
Because the choice of $a$ ensures the maximum proportion on $\muone$
for representing $\muzero$, 
$\mua$ cannot be expressed as a mixture containing $\muone$.
Therefore, 
\begin{align*}
\alphamax = \alpha + a -\alpha a = \alpha + (1-\alpha) a
 \end{align*}
 giving
\begin{align*}
 \alphamax - \alpha= (1-\alpha) \max A(\muzero,\muone, \pzeroall) < \max A(\muzero,\muone, \pzeroall) 
 .
\end{align*}
\end{proof2}

\section{Algorithm development}
\label{sec:algo}

In this section, we formulate the estimation of the mixing proportion in terms of a new set of parameters, $\betavec$. 
We then develop an efficient algorithm by taking advantage of the special optimization surface over $\alpha$, elucidated by Theorem \ref{thm:famQ0} and depicted in Figure \ref{fig:alpha}.

Let $X_1$ be an i.i.d.~sample from the first component and $X$ an i.i.d.~sample from the mixture. To approximate the mixture, a common approach is to use 
\begin{align}
\label{eq:kCompMix}
\hat{\mixf}(x) = \sum_{i=1}^\nkernels \mixwt_i \kappa_i(x) \hspace{1cm} \text{$\mixwt_i\in (0,1)$, $\sum_{i=1}^\nkernels \mixwt_i=1$,}
\end{align}
where $\kappa_i$'s are probability density functions (pdfs) or probability mass functions (pmfs). For continuous random variables, for example, a typical setting is a multivariate Gaussian, $\kappa_i(x) \propto \exp(- \|x - x_i\|_2^2/\sigma^2)$ with learned or predefined centers $x_i$ as in mixture models or examples as centers as in kernel density estimation. For discrete random variables, the $\kappa_i$'s could be multinomials.

To relate $f_1$ to $\mixf$, we similarly approximate $f_1$ by re-weighting the kernels. To do so, we introduce a vector-valued variable $\bm{\beta}=(\beta_1,\ldots, \beta_k)$, where $\beta_i \in (0,1]$, and define
\begin{align*}
\newcone(x|\bm{\beta}) = \frac{\sum_{i=1}^k \beta_i\mixwt_i\kappa_i(x)}{\sum_{i=1}^k \beta_i\mixwt_i} 
\end{align*}
and
\begin{align*}
\newczero(x|\bm{\beta}) &= \frac{\hat{\mixf}(x)-(\sum_{i=1}^k \beta_i\mixwt_i)\newcone(x|\bm{\beta})}{1-\sum_{i=1}^k \beta_i\mixwt_i}
= \frac{\sum_{i=1}^k (1-\beta_i)\mixwt_i\kappa_i(x)}{\sum_{i=1}^k (1-\beta_i)\mixwt_i} 
\end{align*}
giving
\begin{align}
\hat{\mixf}(x)= \left(\sum_{i=1}^k \beta_i\mixwt_i\right)\newcone(x|\bm{\beta}) + \left(1-\sum_{i=1}^k \beta_i\mixwt_i\right)\newczero(x|\bm{\beta})\label{eq:hhat}
.
\end{align}
Notice that $\sum_{i=1}^k \beta_i\mixwt_i \leq 1$ and, thus, $\newcone(x|\betavec)$ and $\newczero(x|\betavec)$ are pmfs for discrete $x$ and pdfs for continuous $x$. It follows that $\hat{\mixf}$ is a mixture with components $\newcone(x|\betavec)$ and $\newczero(x|\betavec)$, where $\alpha = \sum_{i=1}^k \beta_i\mixwt_i$. Intuitively, the $\beta_i$'s are larger if $\cone$ is more similar to $\mixf$; correspondingly, the proportion $\sum_{i=1}^k \beta_i\mixwt_i$ should be larger, because $\cone$ accounts for more of $\mixf$.


Our goal is to obtain an estimate for $\betavec$ with maximal $\sum_{i=1}^\nkernels \beta_i  \mixwt_i$ that minimizes the KL-divergence to the true distributions, $\textrm{KL}(f_1 || h_1(\cdot|\betavec))$ and $\textrm{KL}(\mixf || \hat{\mixf})$; i.e., maximizes the likelihood of $h_1(x|\betavec)$ under sample $X_1$ and the likelihood of $\hat{\mixf}(x)$ under sample $X$. Note that $\hat{\mixf}(x)$ cannot be used in the likelihood function directly because it is not really a function of $\betavec$, as they algebraically cancel. Thus, we define $h(x|\bm\beta)$, similar to $\hat{\mixf}(x)$, obtained by replacing $h_1(x|\bm{\beta})$ in Equation \ref{eq:hhat} by $\hat{f}_1(x)$, an estimate of $f_1$ obtained from $X_1$: 
\begin{align}
h(x|\bm{\beta})= \left(\sum_{i=1}^k \beta_i\mixwt_i\right)\hat{f}_1(x) + \left(1-\sum_{i=1}^k \beta_i\mixwt_i\right)\newczero(x|\bm{\beta})
.
\end{align}

\noindent The combined log-likelihood of $\betavec$ under these models is
\begin{align}
\label{eq:LLGamma}
\mc{L}(\betavec|X,X_1) &=\gamma \mc{L}(\betavec|X)+ \gamma_{1} \mc{L}(\betavec|X_1)
\end{align}
\noindent where
\begin{align*}
 \mc{L}(\betavec|X_1) &=  \sum_{x \in X_1} \log \newcone(x|\betavec) \\
\mc{L}(\betavec|X) &= \sum_{x \in X} \log  h(x|\bm{\beta})
\end{align*}
\noindent and $\gamma$ and $\gamma_{1}$ are nonnegative coefficients. We will later explore two scenarios: (i) $\gamma = \gamma_{1} = 1$ that equally weights each example, and (ii) $\gamma = \nicefrac{1}{|X|}$, $\gamma_{1} = \nicefrac{1}{|X_1|}$ that equally weights each sample. Since the weights do not influence the remainder of this section, we will simply assume that $\gamma = \gamma_{1} = 1$.

There are two remaining issues for the problem specification: concavity and enforcing the \canon. First, although $\mc{L}(\bm{\beta}|X)$ is concave, $ \mc{L}(\bm{\beta}|X_1)$ might not be concave, because 
\begin{align*}
\log \newcone(x|\bm{\beta}) = \log \sum_{i=1}^\nkernels \beta_i \mixwt_i \kappa_i(x) - \log \sum_{i=1}^\nkernels \beta_i \mixwt_i
\end{align*}
has a concave first component and convex second component. Interestingly, however, as described below, by enforcing the max-canonical form in our algorithm, we will be able to avoid this issue.

\begin{algorithm}[H]
\caption{The \AlgName\ algorithm for class prior estimation.} \label{alg_main}
\begin{algorithmic}[] 
\REQUIRE sample $X, X_1$
\ENSURE $\alphamax$
\STATE // Solve level-set optimization for the following set of $\alpha$; for example,
\STATE $c \leftarrow [0.01, 0.02, \ldots, 0.98, 0.99]$
\STATE $n_\alpha \gets $ length($c$)
\FOR{$j = 1, \ldots, n_\alpha$}
\STATE $\ell(j) \gets \max_{\sum_{i=1}^\nkernels \beta_i \mixwt_i = c(j)} \mc{L}(\betavec|X_1,X)$  
\ENDFOR
\STATE // Smooth $\ell$ using median of $2k$-nearest neighbors; typically, $k=3$
\STATE $\ell_{\text{smooth}} \gets \ell$
\FOR{$j = k+1, \ldots, (n_\alpha - k)$}
\STATE $\ell_{\text{smooth}}(j) \gets \text{median}(\ell(j-k), \ldots, \ell(j+k))$
\ENDFOR
\STATE $\ell \gets \ell_{\text{smooth}}$
\STATE // Scale $\ell$ between $0$ and $1$
\STATE $\ell \gets \nicefrac{(\ell - \text{min}(\ell))}{(\text{max}(\ell) -\text{min}(\ell))}$
\STATE // Compute the difference between slopes before and after $j$ using window $win$.
\STATE $\Delta\text{slope} \gets 0$
\FOR{$j = win+1, \ldots, n_\alpha-win$}
\STATE slope-before$(j) \gets$ slope of the linear fit to $\braces*{c(j),\ell(j)}_{j-win}^{j}$. 
\STATE slope-after$(j) \gets$ slope of the linear fit to $\braces*{c(j),\ell(j)}_{j}^{j+win}$.
\STATE $\Delta\text{slope}(j) \gets$ slope-before$(j)-$slope-after$(j)$
\ENDFOR
\STATE // Divide by $1-\ell$ plus a small positive constant $\epsilon$.
\STATE heuristic$ \gets \nicefrac{\Delta\text{slope}}{(1-\ell + \epsilon)}$.
\STATE $\alphamax \leftarrow  c(\text{index-of-max}(\text{heuristic})).$
\end{algorithmic}
\end{algorithm}

We propose a novel algorithm to find $\betavec$ such that $\sum_i \beta_i \mixwt_i = \alphamax$, using two steps. First, we estimate the log-likelihood of $\alpha$ at several points $c_j \in (0,1)$ by using $$\max_{\sum_{i=1}^\nkernels \beta_i \mixwt_i = c_j} \mc{L}(\betavec|X_1,X).$$ By optimizing over level sets, we generate an optimization surface in terms of $\alpha$, as suggested in Figure \ref{fig:alpha}. According to Theorem 1, the surface should have an initial flat region, until $\alphamax$; then the likelihood should begin to deteriorate. The second step, therefore, is to identify this point $\alphamax$ on this surface. For the first step, the constraint makes the convex part of the objective become a constant, resulting in a concave maximization 
\begin{align*}
\argmax_{\sum_{i=1}^\nkernels \beta_i \mixwt_i = c_j} \mc{L}(\betavec|X_1,X)
&=
\argmax_{\sum_{i=1}^\nkernels \beta_i \mixwt_i = c_j}  \sum_{x \in X_1} \log \newcone(x|\betavec) + \sum_{x \in X} \log  h(x|\bm{\beta})\\
&=
\argmax_{\sum_{i=1}^\nkernels \beta_i \mixwt_i = c_j}  \sum_{x \in X_1} \log \sum_{i=1}^\nkernels \beta_i \mixwt_i \kappa_i(x) - \log c_j + \sum_{x \in X} \log  h(x|\bm{\beta})\\
&=
\argmax_{\sum_{i=1}^\nkernels \beta_i \mixwt_i = c_j}  \sum_{x \in X_1} \log \sum_{i=1}^\nkernels \beta_i \mixwt_i \kappa_i(x) + \sum_{x \in X} \log  h(x|\bm{\beta})
.
\end{align*}
For the second step, we identify the point where the slope changes the most. To improve robustness of this step, we smoothed the curve using median values of closest $k$ neighbors. The full algorithm, referred to here as \AlgName, is summarized in Algorithm \ref{alg_main}. 

\section{Transformations that preserve $\alphamax$}

In this section, we discuss one approach to practically learning on (high-dimensional) multivariate data. The approach consists of transforming the multivariate data to univariate data in such a way that $\alphamax$ for the transformed data is unchanged. At first glance, this transformation may seem unnecessary, as our class prior estimation algorithm is a generic non-parametric approach.\footnote{Radial basis function networks are universal approximators \citep{Park1991} and kernel density estimators are consistent for any density \citep{Scott1979}.} For example, to extend to structured data, kernel components could be chosen to measure similarities for these objects. Despite this generality, practical kernel density estimation under high-dimensional and/or structured data can be problematic. There are curse-of-dimensionality issues with high-dimensional density estimation, both in theory \citep{liu2007sparse} and in practice \citep{scott2008curse}. One strategy for high-dimensional density estimation is to use product kernels \citep{cooley1998classification, liu2007sparse}. We propose to instead transform the multivariate data to univariate data using a probabilistic classifier, taking advantage of the fact that we are in a classification setting. Importantly, we can prove that this transformation preserves $\alphamax$ in \autoref{thm:classifier}; that is, the class prior for this univariate data is equal to the class prior for the original multivariate data. Once the data is transformed to univariate, we avoid the curse-of-dimensionality for density estimation. Although we still have to deal with high-dimensional spaces in classification, we can exploit a richer set of techniques to overcome these problems \citep{Hastie2001}.

The procedure to obtain this univariate transformation reduces to a classification problem. We construct a training data set with all the mixture examples labeled as class $0$ and component examples as class $1$. The classifier trained on this data set approximates the probability that $x$ is labeled. These probability values comprise the transformed univariate data set.
 
 \begin{figure}[]
  \centering
  \includegraphics[scale=1.0]{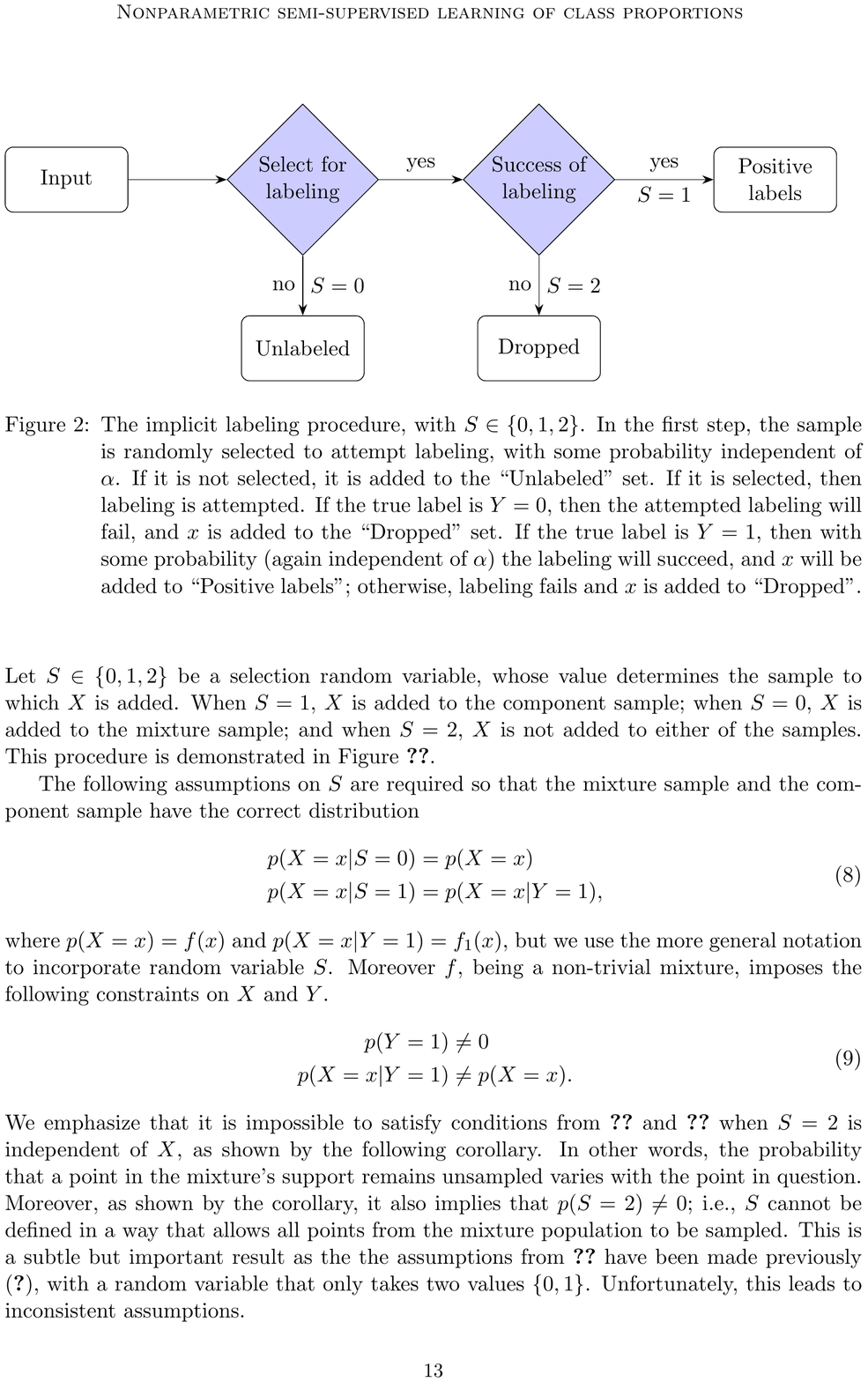}
\caption{The implicit labeling procedure, with $S \in \{0,1,2\}$. In the first step, the sample is randomly selected to attempt labeling, with some probability independent of $\alpha$. If it is not selected, it is added to the ``Unlabeled" set. If it is selected, then labeling is attempted. If the true label is $Y = 0$, then the attempted labeling will fail, and $x$ is added to the ``Dropped" set. If the true label is $Y=1$, then with some probability (again independent of $\alpha$) the labeling will succeed, and $x$ will be added to ``Positive labels"; otherwise, labeling fails and $x$ is added to ``Dropped".}\label{fig_labeling}
\end{figure}

To show that this approach preserves $\alphamax$, we outline a set of probabilistic assumptions. Let $X$ be the random variable distributed according to the mixture $\mixf$ and $Y$ be the unobserved random variable giving the label of the component from which $X$ was generated.\footnote{Note the abuse of notation; previously, we used $X$ to indicate the mixture sample but in this Section we temporarily override this notation.} Let $S \in \{0,1,2\}$ be a selection random variable, whose value determines the sample to which $X$ is added. When $S=1$, $X$ is added to the component sample; when $S=0$, $X$ is added to the mixture sample; and when $S=2$, $X$ is not added to either of the samples. This procedure is demonstrated in Figure \ref{fig_labeling}. 
 
The following assumptions on $S$ are required so that the mixture sample and the component sample have the correct distribution 
\begin{equation}
\begin{aligned}
\label{eq:SXY_dependence}
p(X=x|S=0)&=p(X=x) \\
p(X=x|S=1)&=p(X=x|Y=1),
\end{aligned}
\end{equation}
where $p(X=x) = f(x)$ and $p(X=x|Y=1) = \cone(x)$,
but we use the more general notation to incorporate random variable $S$.
Moreover $f$, being a non-trivial mixture, imposes the following constraints on $X$ and $Y$.
\begin{equation}
\begin{aligned}
\label{eq:XY_nontrivial}
p(Y=1) &\neq 0\\
p(X=x|Y=1)&\neq p(X=x).
\end{aligned}
\end{equation}
We emphasize that it is impossible to satisfy conditions from \autoref{eq:SXY_dependence} and \autoref{eq:XY_nontrivial} when $S=2$ is independent of $X$, as shown by the following corollary. In other words, the probability that a point in the mixture's support remains unsampled varies with the point in question. Moreover, as shown by the corollary, it also implies that $p(S=2)\neq 0$; i.e., $S$ cannot be defined in a way that allows all points from the mixture population to be sampled. This is a subtle but important result as the the assumptions from \autoref{eq:SXY_dependence} have been made previously \citep{Phillips2009}, with a random variable that only takes two values $\{0,1\}$. Unfortunately, this leads to inconsistent assumptions.
\begin{proposition}
For random variables $X$, $Y$ and $S$ defined above, if the conditions from \autoref{eq:SXY_dependence} are satisfied, then $S=2$ is dependent on $X$. 
Moreover, 
$p(S=2)\neq 0$.
\end{proposition}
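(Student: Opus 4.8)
The plan is to apply the law of total probability to the marginal law of $X$, decomposing over the three values of $S$, and then substitute the two conditional identities from \eqref{eq:SXY_dependence}. Writing $\pi_s = p(S=s)$ and working with \emph{joint} probabilities (so as not to divide by a possibly-zero $p(S=s)$), I would record that for every $x$
\begin{align*}
p(X=x) &= p(X=x, S=0) + p(X=x, S=1) + p(X=x, S=2)\\
&= \pi_0\, p(X=x) + \pi_1\, p(X=x \mid Y=1) + p(X=x, S=2),
\end{align*}
using $p(X=x\mid S=0)=p(X=x)$ and $p(X=x\mid S=1)=p(X=x\mid Y=1)$. Since $\pi_0 = 1-\pi_1-\pi_2$, rearranging isolates the one relation that drives both claims, namely $p(X=x, S=2) = (\pi_1+\pi_2)\,p(X=x) - \pi_1\, p(X=x\mid Y=1)$.

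To prove that $S=2$ is dependent on $X$ I would argue by contradiction. Independence of the event $\{S=2\}$ from $X$ means $p(X=x, S=2) = \pi_2\, p(X=x)$ for all $x$. Substituting this into the isolated relation and cancelling the $\pi_2\,p(X=x)$ terms leaves $\pi_1\bigl(p(X=x) - p(X=x\mid Y=1)\bigr) = 0$ for every $x$. Because $\pi_1 > 0$, this forces $p(X=x) = p(X=x\mid Y=1)$ for all $x$, directly contradicting the non-triviality assumption $p(X=x\mid Y=1)\neq p(X=x)$ from \eqref{eq:XY_nontrivial}. Hence no such independence can hold.

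For the ``moreover'' claim, I would observe that it is essentially already contained in the dependence claim: if $p(S=2)=0$ then $p(X=x, S=2)=0=\pi_2\,p(X=x)$ for all $x$, so $\{S=2\}$ would \emph{automatically} be independent of $X$, which is exactly the case just excluded. Thus $p(S=2)\neq 0$. (Equivalently, setting $\pi_2=0$ directly in the isolated relation again collapses it to $p(X=x)=p(X=x\mid Y=1)$, contradicting \eqref{eq:XY_nontrivial}.)

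The algebra here is routine; the only delicate point is the status of $\pi_1$. The argument needs $\pi_1 > 0$, which is not listed among the explicit hypotheses but is implicit in the setup: the conditioning event $\{S=1\}$ must carry positive probability for the second identity of \eqref{eq:SXY_dependence} to be meaningful and for a component sample to be formed at all. I would state this observation explicitly before invoking it. Everything else is insensitive to whether $X$ is discrete or continuous, provided $p(\cdot)$ is read uniformly as a mass function or density, so the proof transfers without change between the two settings.
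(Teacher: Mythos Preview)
Your proof is correct and follows essentially the same route as the paper: decompose over the values of $S$, substitute the identities from \eqref{eq:SXY_dependence}, and derive a contradiction with \eqref{eq:XY_nontrivial}; the ``moreover'' clause is handled identically (if $p(S=2)=0$ then $\{S=2\}$ is trivially independent of $X$). The only cosmetic difference is that the paper manipulates the conditional $p(S=2\mid X=x)$ and so arrives at the ratio $p(X=x\mid Y=1)/p(X=x)$, requiring a short integration argument to force the constant to equal $1$, whereas your joint-probability formulation yields the difference $\pi_1\bigl(p(X=x)-p(X=x\mid Y=1)\bigr)=0$ directly and skips that step.
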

\begin{proof}

\begin{align*}
p(S=2|X=x) &=1- p(S=0|X=x) - p(S=1|X=x)\\
&= 1- p(S=0) - \frac{p(X=x|S=1)}{p(X=x)}p(S=1) \tag{because $S=0$ and $X$ are independent from \autoref{eq:SXY_dependence}}\\
&= 1- p(S=0) - \frac{p(X=x|Y=1)}{p(X=x)}p(S=1)    \tag{from \autoref{eq:SXY_dependence}}               
\end{align*}

\noindent The probability $p(S=2 | X = x)$ is independent of $x$ only if $\nicefrac{p(X=x|Y=1)}{p(X=x)}$ is a constant with respect to $x$. However, 
such a constant can only be $1$, which is inconsistent with \autoref{eq:XY_nontrivial}. To see why this constant can only be $1$, assume
$\nicefrac{p(X=x|Y=1)}{p(X=x)} = c$. Integrating over $x$ on both sides gives
$\int_{\mathcal{X}} p(X=x|Y=1) dx = c \int_{\mathcal{X}} p(X=x) dx$. Since
both integrals are $1$, it follows that $c = 1$. Therefore, $S=2$ is not independent of $X$. 


To prove $p(S=2)\neq0$ we give a proof by contradiction: $p(S=2)=0$ implies that  $p(S=2|X=x)=0$, which further implies $S=2$ is independent of $X$. However, this is not possible as shown above, hence the contradiction. 
\end{proof}

We now define a transformation that preserves the mixing proportion, ensuring that the $\alphamax$ for the transformed (univariate) data is equal to the $\alphamax$ for the original (multivariate) data.
In the following theorem, we prove that the probabilistic classifier 
\begin{equation}
\tau(x) = p(S=1|X=x,S\in\{0,1\}) \label{eq_tau}
\end{equation}
preserves $\alphamax$. This theorem references a later more general theorem for other univariate transforms. We focus on this transformation because it is a concrete, useful example. Moreover, once $\alphamax$ (i.e., $p(Y=1)$) is estimated, $\tau$ can be used to obtain a traditional classifier because it gives an estimate of $p(Y=1|X=x)$, as stated in the theorem. For this theorem, we assume that $X$ is a continuous random variable with a pdf, for simplicity of presentation. This assumption can be generalized to discrete or mixed random variables, with slightly more cumbersome notation. 
\begin{theorem}[\textbf{$\alphamax$-preserving transformation}]
\label{thm:classifier}
Let $X,Y,S$ be the random variables defined above under the dependence assumptions in \autoref{eq:SXY_dependence}.
Assume $X$ is a continuous random variable with pdf $\mixf$
and let $X_1$ be the random variable corresponding to the first component in the mixture with pdf $\cone$. Let $\mu$ and $\muone$ be the corresponding measures. 
Let $\tau(x) = p(S=1|X=x,S\in\{0,1\})$.
Then the random variables $\tau(X)$ and $\tau(X_1)$, with measures $\numix$ and $\nuone$,
have the same mixing proportion
\begin{align*}
\alphamax = \inf R(\mumix, \muone) = \inf R(\numix, \nuone)
.
\end{align*}
and so $\tau$ is an $\alphamax$-preserving transformation.

Moreover, $\tau$ can then also be used for classification because
\begin{align*}
p(Y=1|X) = \frac{cp(Y=1)\tau(X)}{1-\tau(X)}
\end{align*}
where $c=\nicefrac{p(S=0)}{p(S=1)}$ and 
$\tau(x) = \cone(x) (\cone(x) + c \mixf(x))^{-1} < 1$ for all $x \in \xset$.
\end{theorem}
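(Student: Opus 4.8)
The plan is to treat the theorem in three stages: first derive a closed form for $\tau$, then verify the classification identity, and finally—the substantive part—show that $\tau$ preserves $\alphamax$. For Stage 1, starting from $\tau(x)=p(S=1\mid X=x,\,S\in\{0,1\})$, I would apply Bayes' rule to write $\tau(x)$ as $p(X=x\mid S=1)\,p(S=1)$ divided by $p(X=x\mid S=0)\,p(S=0)+p(X=x\mid S=1)\,p(S=1)$, then substitute the two identities of \eqref{eq:SXY_dependence}, namely $p(X=x\mid S=0)=\mixf(x)$ and $p(X=x\mid S=1)=\cone(x)$. Dividing through by $p(S=1)$ gives $\tau(x)=\cone(x)\,(\cone(x)+c\,\mixf(x))^{-1}$ with $c=\nicefrac{p(S=0)}{p(S=1)}$. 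Since Lemma \ref{lem:Ah} gives $\nicefrac{\mixf(x)}{\cone(x)}\ge\alphamax>0$ wherever $\cone(x)>0$, the denominator strictly exceeds the numerator there, while $\tau(x)=0$ when $\cone(x)=0$; hence $\tau(x)<1$ everywhere.

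For Stage 2, the closed form yields $1-\tau(x)=c\,\mixf(x)\,(\cone(x)+c\,\mixf(x))^{-1}$, so $\nicefrac{\tau(x)}{(1-\tau(x))}=\nicefrac{\cone(x)}{(c\,\mixf(x))}$. Combining with $p(Y=1\mid X=x)=\nicefrac{\cone(x)\,p(Y=1)}{\mixf(x)}$ (Bayes' rule, using $p(X=x\mid Y=1)=\cone(x)$) immediately gives $p(Y=1\mid X=x)=\nicefrac{c\,p(Y=1)\,\tau(x)}{(1-\tau(x))}$, as claimed. Stages 1 and 2 are routine algebra.

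Stage 3 is the heart of the proof. Let $\numix$ and $\nuone$ be the pushforward measures of $\mumix$ and $\muone$ under $\tau$, so $\numix(B)=\mumix(\tau^{-1}(B))$ and $\nuone(B)=\muone(\tau^{-1}(B))$ for Borel $B$; working at the measure level avoids any need for densities of $\tau(X)$. For $\inf R(\numix,\nuone)\ge\alphamax$, observe that for every $B$ with $\nuone(B)>0$ the set $A=\tau^{-1}(B)\in\Salgebra$ has $\muone(A)>0$ and $\nicefrac{\numix(B)}{\nuone(B)}=\nicefrac{\mumix(A)}{\muone(A)}$; thus $R(\numix,\nuone)\subseteq R(\mumix,\muone)$ and the infimum can only increase. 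For the reverse inequality I would fix any $\rho>\alphamax$ and consider the level set $A_\rho=\braces*{x:\cone(x)>0,\ \nicefrac{\mixf(x)}{\cone(x)}\le\rho}$. Because $\tau$ is a strictly decreasing function of the density ratio (indeed $\tau=(1+c\,\nicefrac{\mixf}{\cone})^{-1}$) and vanishes exactly where $\cone=0$, this level set equals $\tau^{-1}([t_\rho,1])$ with $t_\rho=(1+c\rho)^{-1}$. On $A_\rho$ we have $\mixf\le\rho\cone$, so $\mumix(A_\rho)\le\rho\,\muone(A_\rho)$, giving $\nicefrac{\numix([t_\rho,1])}{\nuone([t_\rho,1])}\le\rho$ as long as $\nuone([t_\rho,1])=\muone(A_\rho)>0$. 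Letting $\rho\downarrow\alphamax$ forces $\inf R(\numix,\nuone)\le\alphamax$, and combining the two inequalities yields $\inf R(\numix,\nuone)=\alphamax=\inf R(\mumix,\muone)$.

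The main obstacle is the positivity $\muone(A_\rho)>0$ for every $\rho>\alphamax$; this is exactly what makes the low-ratio level sets visible to the pushforward. I would prove it by contradiction: if $\muone(A_\rho)=0$, then $\nicefrac{\mixf}{\cone}>\rho$ holds $\muone$-almost everywhere on $\{\cone>0\}$, so $\mumix(A)=\int_A\mixf\ge\rho\,\muone(A)$ for every $A$ with $\muone(A)>0$, whence $\inf R(\mumix,\muone)\ge\rho>\alphamax$, contradicting Lemma \ref{lem:Ah}. This step relies on the density characterization $\alphamax=\inf\braces*{\nicefrac{\mixf(x)}{\cone(x)}:\cone(x)>0}$ from Lemma \ref{lem:Ah} and on the crucial structural fact that $\tau$ depends on $x$ only through the density ratio, so that its level sets are precisely the right test sets for recovering $\alphamax$.
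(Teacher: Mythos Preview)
Your proof is correct and follows the same underlying strategy as the paper: derive the closed form $\tau=(1+c\,\mixf/\cone)^{-1}$ via Bayes' rule and \eqref{eq:SXY_dependence}, then exploit that $\tau$ depends on $x$ only through the density ratio, so that its super-level sets coincide with the low-ratio sets $\{\mixf/\cone\le\rho\}$ that witness $\alphamax$ from below. The paper packages the preservation step differently: it first proves a general lemma (Lemma~\ref{lem:alphamaxleqpdf}) that $\alphamaxtau\ge\alphamax$ holds for \emph{any} transform, obtained by pushing forward the mixture decomposition $\mumix=\alphamax\muone+(1-\alphamax)\mumax$, and then a general theorem (Theorem~\ref{thm:transformpdf}) for all $\tau=H\circ\taudiv$ with $H$ one-to-one, so that Theorem~\ref{thm:classifier} reduces to checking $\tau$ has this form. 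Your inclusion $R(\numix,\nuone)\subseteq R(\mumix,\muone)$ is an equally valid and slightly more direct route to the lower bound, and your positivity argument for $\muone(A_\rho)>0$ (via the Lebesgue-a.e.\ inequality $\mixf\ge\rho\cone$ forcing $\inf R(\mumix,\muone)\ge\rho$) is cleaner than the paper's, which intersects $\Bepsilon$ with an auxiliary set $\Aepsilon$ drawn from the measure-level infimum. In short: same idea, with your version streamlined by working directly with $\tau$ rather than through the abstract factorization; what you lose is the paper's byproduct that \emph{every} one-to-one reparametrization of the density ratio preserves $\alphamax$.
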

\begin{proof}
Let 
\begin{align*}
\taudiv(x) = \left\{ \begin{array}{ll}
         \mixf(x) / \cone(x) & \mbox{if $\cone(x) > 0$}\\
        \infty & \mbox{if $\cone(x) = 0$}.\end{array} \right.
\end{align*}
To prove that $\tau$ is $\alphamax$-preserving, we simply need to prove that
$\tau$ satisfies the conditions of \autoref{thm:transformpdf} (proved below); i.e.,
that $\tau$ is the composition of a one-to-one function and $\taudiv$. 
\begin{align}
\tau(x)&= \frac{p(S=1,X=x,S\in\{0,1\})}{p(X=x,S\in\{0,1\})} \nonumber\\
&=\frac{p(S=1,X=x)}{p(X=x,S=0)+p(X=x, S=1)}\notag\\
&=\frac{p(X=x|S=1)p(S=1)}{p(X=x|S=0)p(S=0)+p(X=x|S=1)p(S=1)}\nonumber\\
&=\frac{p(X=x|Y=1)p(S=1)}{p(X=x)p(S=0)+p(X=x|Y=1)p(S=1)} \tag{applying \autoref{eq:SXY_dependence}}\nonumber\\
&=\frac{1}{1+ \frac{p(S=0)}{p(S=1)}\frac{p(X=x)}{p(X=x|Y=1)}} \nonumber \\
&=\frac{1}{1+ c\frac{p(X=x)}{p(X=x|Y=1)}} \label{eq_eqtaudiv}\\
&=\frac{1}{1+ c\taudiv(x)} \nonumber
\end{align}
Therefore, $\tau$ satisfies the conditions of \autoref{thm:transformpdf}, because it is a composition $\tau = H \circ \taudiv$,
with one-to-one function $H(z) = (1 + c z)^{-1}$. 
Rearranging terms in \eqref{eq_eqtaudiv}, we can further see that
\begin{align*}
\tau(x)
&=\frac{1}{1+ c\frac{\mixf(x)}{\cone(x)}} = \cone(x) (\cone(x) + c \mixf(x))^{-1} 
.
\end{align*}

\noindent For the second claim, again starting at  \eqref{eq_eqtaudiv}, we get the following by using Bayes rule
\begin{align}
\tau(x)
&=\frac{1}{1+ c\frac{p(X=x)}{p(X=x|Y=1)}} \nonumber\\
&=\frac{1}{1+ c\frac{p(Y=1)}{p(Y=1|X=x)}} \label{eq:SX2}
.
\end{align}
Rearranging \autoref{eq:SX2},
\begin{align*}
p(Y=1|X=x) = \frac{cp(Y=1)\tau(x)}{1-\tau(x)}
.
\end{align*}
\end{proof}
%

The constant $c$ can be estimated by dividing the size of the mixture sample by the size of the component sample.

\subsection{General theorem for univariate transforms}
In this section, we prove a more general theorem about univariate transforms, with conditions on the transform which we above showed that $\tau$ satisfies. First, we will prove that any transform $\tau$ gives an $\alphamaxtau$ in the transformed space that is an upper bound on the $\alphamax$ in the original space. Then we will show the conditions on $\tau$ that ensure the $\alphamax$ are equal in the two spaces.

\begin{lemma}
\label{lem:alphamaxleqpdf}
Let $X$ and $X_1$ be random variables with 
measures $\mumix$ and $\muone$ respectively,
satisfying $\mumix \in \mc{F}(\pzeroall,\muone)$.
Let $\tau$ be any function defined on $\xset$, and $\numix$, $\nuone$
the measures for the random variables $\tau(X)$,
$\tau(X_1)$ respectively for $\sigma$-algebra $\Salgebra_\tau$.  Let 
\begin{align*}
\alphamax &= \inf \mc{R}(\mumix,\muone) \\
\alphamaxtau &=\inf \mc{R}(\numix,\nuone)
.
\end{align*}
Then $\alphamaxtau \geq \alphamax$.
\end{lemma}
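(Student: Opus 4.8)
The plan is to show that the set of ratios defining $\alphamaxtau$ is contained in the set of ratios defining $\alphamax$, after which the inequality follows at once from the elementary fact that the infimum of a set is at most the infimum of any of its subsets. No analytic estimates are needed; the argument is essentially set-theoretic.

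First I would unpack the definitions of $\numix$ and $\nuone$ as the pushforward (image) measures of $\mumix$ and $\muone$ under $\tau$. For $\tau(X)$ and $\tau(X_1)$ to have well-defined measures on $\Salgebra_\tau$, $\tau$ must be measurable, so that every $B \in \Salgebra_\tau$ has preimage $\tau^{-1}(B) \in \Salgebra$; by the definition of the pushforward, $\numix(B) = \mumix(\tau^{-1}(B))$ and $\nuone(B) = \muone(\tau^{-1}(B))$.

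The key step is the observation that each ratio appearing in $R(\numix,\nuone)$ already appears in $R(\mumix,\muone)$. Concretely, fix any $B \in \Salgebra_\tau$ with $\nuone(B) > 0$ and set $A = \tau^{-1}(B) \in \Salgebra$. Then $\muone(A) = \nuone(B) > 0$ and $\mumix(A) = \numix(B)$, so $\nicefrac{\numix(B)}{\nuone(B)} = \nicefrac{\mumix(A)}{\muone(A)}$, which is an element of $R(\mumix,\muone)$. Since $B$ was arbitrary, this gives $R(\numix,\nuone) \subseteq R(\mumix,\muone)$.

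Finally, because taking the infimum over a smaller set can only increase or preserve the value, I would conclude $\alphamaxtau = \inf R(\numix,\nuone) \geq \inf R(\mumix,\muone) = \alphamax$, which is exactly the claim. The only point where I would be careful—and the nearest thing to an obstacle—is justifying the pushforward identification of $\numix$ and $\nuone$: it is precisely measurability of $\tau$ that guarantees preimages of $\Salgebra_\tau$-sets lie in $\Salgebra$, which is what makes the substitution $A = \tau^{-1}(B)$ legitimate. Beyond this there is no difficulty, since everything reduces to monotonicity of the infimum under set inclusion.
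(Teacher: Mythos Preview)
Your proof is correct and takes a cleaner, more direct route than the paper's. The paper argues by preserving the mixture structure: it writes $\mu = \alpha^{*}\mu_1 + (1-\alpha^{*})\mu_0$, introduces an auxiliary Bernoulli variable $Z$ to represent $X = ZX_1 + (1-Z)X_0$, observes that $\tau(X) = Z\tau(X_1) + (1-Z)\tau(X_0)$ so that $\nu = \alpha^{*}\nu_1 + (1-\alpha^{*})\nu_0$, and then concludes $\nu(A)/\nu_1(A) \ge \alpha^{*}$ for every $A$ with $\nu_1(A)>0$. You instead bypass the mixture decomposition entirely by identifying $\nu$ and $\nu_1$ as pushforwards of $\mu$ and $\mu_1$, which immediately yields $R(\nu,\nu_1)\subseteq R(\mu,\mu_1)$ and hence the inequality by monotonicity of the infimum. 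Your argument is shorter, needs no auxiliary random variable, and in fact does not use the hypothesis $\mu\in\mathcal{F}(\mathcal{P}_0^{\text{all}},\mu_1)$ for the inequality itself (only to ensure $\alpha^{*}$ is defined). The paper's route has the incidental benefit of exhibiting $\nu$ explicitly as a mixture containing $\nu_1$, which fits the surrounding narrative, but for the bare inequality your pushforward argument is more economical.
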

\begin{proof}
First, 
there is a corresponding random variable $X_0$ with measure $\muzero$ such that $\mumix = \alphamax \muone + (1-\alphamax) \muzero$. 

For $Z \sim Bernoulli(\alphamax)$ (independent of $X, X_0$ and $X_1$), it is easy to show that $X = ZX_1 + (1-Z) X_0$. 
Therefore,
\begin{align*}
\tau(X) &= \tau \paren*{ZX_1 + (1-Z) X_0}\\
&= \begin{cases}
            \tau(X_1) & \ when\  Z=1\\  
            \tau(X_0) & \ when\  Z=0
            \end{cases}\\
&= Z\tau(X_1) + (1-Z) \tau(X_0).            
\end{align*}
Thus $\tau(X)$ is a mixture containing $\tau(X_1), \tau(X_0)$ with $\alphamax$ as the mixing proportion. 
In other words $\numix=\alphamax\nuone + (1-\alphamax)\nuzero$. 
For any $A \in \Salgebra_\tau$ such that $\nuone(A) > 0$, since $\nuzero(A) \ge 0$,
we get $\numix(A) / \nuone(A) \ge \alphamax$.
Since this is true for any $\nuzero(A) \ge 0$,
we get $\alphamaxtau = \inf R(\numix, \nuone) \ge \alphamax$.
\end{proof}
Now to get equality, we propose the following transformation: $\tau = H \circ \taudiv$,
where $H$ is a one-to-one function and 
\begin{align*}
\taudiv(x) = \left\{ \begin{array}{ll}
         \mixf(x) / \cone(x) & \mbox{if $\cone(x) > 0$}\\
        \infty & \mbox{if $\cone(x) = 0$}.\end{array} \right.
\end{align*}

\begin{theorem}
\label{thm:transformpdf}
Let $X$ and $X_1$ be random variables with 
pdfs $\mixf$ and $\cone$ and
measures $\mumix$ and $\muone$ respectively,
satisfying $\mumix \in \mc{F}(\pzeroall,\muone)$.
For $\overline{\mb{R}}^+=\mb{R}^+ \cup \{0,\infty\}$ and an abstract space $\xsett$,
given any one-to-one function $H: \overline{\mb{R}}^+ \rightarrow \xsett$,
define function $\tau: \xset \rightarrow \xsett$
 \begin{align*}
 \tau = H \circ \taudiv
 .
 \end{align*}
 Let $\numix$ and $\nuone$ be 
the measures for the random variables $\tau(X)$,
$\tau(X_1)$ respectively for $\sigma$-algebra $\Salgebra_\tau$ on $\xsett$.  Let 
\begin{align*}
\alphamax &= \inf \mc{R}(\mumix,\muone) \\
\alphamaxtau &=\inf \mc{R}(\numix,\nuone)
.
\end{align*}
Then $\alphamaxtau = \alphamax$.
\end{theorem}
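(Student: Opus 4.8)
The plan is to build on Lemma~\ref{lem:alphamaxleqpdf}, which already yields $\alphamaxtau \geq \alphamax$ for an \emph{arbitrary} $\tau$. Hence the entire content of the theorem lies in the reverse inequality $\alphamaxtau \leq \alphamax$, and this is where the extra structure $\tau = H \circ \taudiv$ with $H$ one-to-one must be exploited (it cannot hold for general $\tau$, since collapsing the likelihood ratio destroys information). The organizing observation is that a one-to-one $H$ merely relabels the values of $\taudiv$: for any $A \in \Salgebra_\tau$ we have $\tau^{-1}(A) = \taudiv^{-1}(H^{-1}(A))$, so the pushforward measures $\numix,\nuone$ and the achievable ratios $\nicefrac{\numix(A)}{\nuone(A)}$ coincide with those obtained by pushing $\mumix,\muone$ forward through $\taudiv$ alone. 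It therefore suffices to exhibit a family of level sets of $\taudiv$ whose $\mumix$-to-$\muone$ mass ratios approach $\alphamax$ from above.

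For the construction I would invoke Part~5 of Lemma~\ref{lem:Ah}, which gives $\alphamax = \inf\braces*{\nicefrac{\mixf(x)}{\cone(x)} : \cone(x)>0}$; in particular $\taudiv(x) \geq \alphamax$ for every $x$ (trivially where $\cone(x)=0$, since there $\taudiv(x)=\infty$). Fix $\epsilon>0$, let $B_\epsilon = \{z \in \overline{\mb{R}}^+ : z < \alphamax+\epsilon\}$, and put $A_\epsilon = H(B_\epsilon)$, so that $\tau^{-1}(A_\epsilon) = \taudiv^{-1}(B_\epsilon) = \{x : \taudiv(x) < \alphamax+\epsilon\}$ by injectivity of $H$. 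On this set $\cone(x)>0$ and $\mixf(x) = \taudiv(x)\cone(x)$, so
\begin{align*}
\numix(A_\epsilon) = \int_{\{\taudiv<\alphamax+\epsilon\}} \mixf \, d\leb = \int_{\{\taudiv<\alphamax+\epsilon\}} \taudiv \, \cone \, d\leb, \qquad \nuone(A_\epsilon) = \int_{\{\taudiv<\alphamax+\epsilon\}} \cone \, d\leb.
\end{align*}
Since $\alphamax \leq \taudiv < \alphamax+\epsilon$ throughout the domain of integration, these integrals are squeezed to give $\alphamax \leq \nicefrac{\numix(A_\epsilon)}{\nuone(A_\epsilon)} < \alphamax+\epsilon$, which is exactly a ratio witnessing $\alphamaxtau < \alphamax+\epsilon$, provided we know $\nuone(A_\epsilon)>0$.

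The remaining point, and the one I expect to be the most delicate, is the strict positivity $\nuone(A_\epsilon) = \muone(\{\taudiv<\alphamax+\epsilon\})>0$: the \emph{pointwise} infimum of Lemma~\ref{lem:Ah} must be upgraded to a statement about $\muone$-measure. I would argue by contradiction: if this measure were $0$, then $\mixf \geq (\alphamax+\epsilon)\cone$ would hold Lebesgue-almost everywhere on $\{\cone>0\}$ (using that $\muone$ and $\leb$ are mutually absolutely continuous there, since $\cone>0$), whence $\mumix(A) \geq (\alphamax+\epsilon)\muone(A)$ for every $A$ with $\muone(A)>0$, forcing $\inf R(\mumix,\muone) \geq \alphamax+\epsilon$ and contradicting Lemma~\ref{lem:Ah}. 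Given positivity, letting $\epsilon \downarrow 0$ gives $\alphamaxtau \leq \alphamax$, and combining with Lemma~\ref{lem:alphamaxleqpdf} yields $\alphamaxtau = \alphamax$. The only other technical care needed is the measurability of $A_\epsilon = H(B_\epsilon)$ in $\Salgebra_\tau$; this follows from the injectivity of $H$ (so that $H^{-1}(A_\epsilon)=B_\epsilon$ is Borel) together with the standing assumption that $\tau(X),\tau(X_1)$ are well-defined random variables on $(\xsett,\Salgebra_\tau)$, i.e.\ that these level sets lie in $\Salgebra_\tau$.
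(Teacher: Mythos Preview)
Your proposal is correct and follows essentially the same route as the paper: invoke Lemma~\ref{lem:alphamaxleqpdf} for $\alphamaxtau \ge \alphamax$, then for the reverse inequality take the sublevel set $\{\taudiv < \alphamax+\epsilon\}$, bound its $\mumix/\muone$ ratio by $\alphamax+\epsilon$ via the pointwise inequality $\mixf \le (\alphamax+\epsilon)\cone$, push through $H$ using injectivity, and let $\epsilon \downarrow 0$. The one place you diverge is the positivity step $\muone(\{\taudiv<\alphamax+\epsilon\})>0$: the paper argues constructively by pulling a witness set $A_\epsilon$ from the infimum $\alphamax=\inf R(\mumix,\muone)$ and showing its intersection with the sublevel set has positive $\muone$-mass, whereas you argue by contradiction that zero mass would force $\mixf \ge (\alphamax+\epsilon)\cone$ Lebesgue-a.e.\ and hence $\inf R(\mumix,\muone)\ge \alphamax+\epsilon$; your version is a bit cleaner and avoids the auxiliary intersection, but the two are logically equivalent.
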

\begin{proof}
Lemma \ref{lem:alphamaxleqpdf} already proves that $\alphamaxtau \ge \alphamax$.
Therefore, we simply need to prove that $\alphamaxtau \le \alphamax$.

\textbf{Part 1:} First, we define a set $\Bepsilon \in \Salgebra$ such
that $\mumix(\Bepsilon) \le  (\alphamax + \epsilon) \muone(\Bepsilon)$ for any $\epsilon>0$.
Let $T \subseteq \overline{\mb{R}}^+$ be the range of $\taudiv$. Let $\Tepsilon = T \cap [\alphamax, \alphamax +\epsilon)$ for some $\epsilon > 0$. Notice first that $\Tepsilon \neq \emptyset$ because
there exist $x \in \xset$ with $\cone(x)>0$ such that $\taudiv(x) = \mixf(x)/\cone(x) \le \alphamax + \epsilon$ (using $\alphamax=\inf \braces*{\nicefrac{\mixf(x)}{\cone(x)}: x \in \xset, \ \cone(x)>0}$ from Lemma \ref{lem:Ah}). 
Let $\Bepsilon$ be the inverse image of $\Tepsilon$ under $\taudiv$
\begin{align*}
\Bepsilon = \braces*{x \in \mb{R}^d: \taudiv(x) \le \alphamax + \epsilon, \cone(x)>0}
.
\end{align*}
 Thus for all $x \in \Bepsilon$, $\mixf(x) \le (\alphamax + \epsilon) \cone(x)$. Integrating over $\Bepsilon$, 
\begin{align}
\mumix(\Bepsilon) = \int_{\Bepsilon} \mixf(x) dx \le (\alphamax + \epsilon) \int_{\Bepsilon} \cone(x) dx = (\alphamax + \epsilon) \muone(\Bepsilon) \label{eq:BepInq}.
\end{align}

\textbf{Part 2:} Now we show that $\muone(\Bepsilon) > 0$.
Because $\alphamax = \inf R(\mumix,\muone)$, for a given $\epsilon > 0$, there exists $\Aepsilon$ such that $\muone(\Aepsilon)>0$ and  $\nicefrac{\mumix(\Aepsilon)}{\muone(\Aepsilon)} < \alphamax+\epsilon$. Let $\Aepsilon^0 = \Bepsilon \cap \Aepsilon$.
%
 
For all $x \in \Aepsilon^0$, $\mixf(x) < (\alphamax + \epsilon) \cone(x)$. Integrating over $\Aepsilon^0$ on both the sides, we get  $\int_{\Aepsilon^0} f(x) dx < (\alphamax + \epsilon) \int_{\Aepsilon^0} f_1(x) dx$. Thus $\mumix\paren*{\Aepsilon^0} < (\alphamax + \epsilon) \muone\paren*{\Aepsilon^0}$. Similarly, for all $x \in \Aepsilon\setminus\Aepsilon^0$, $\mixf(x) \geq (\alphamax + \epsilon) \cone(x)$ and consequently, $\mumix\paren*{\Aepsilon\setminus \Aepsilon^0} \geq (\alphamax + \epsilon) \muone\paren*{\Aepsilon\setminus \Aepsilon^0}$.
 
 Suppose $\muone\paren*{\Aepsilon^0}=0$. It follows that $\mu\paren*{\Aepsilon^0}=0$ and consequently,
\begin{align*}
\mu(\Aepsilon) &= \mu\paren*{\Aepsilon^0} + \mu\paren*{\Aepsilon \setminus \Aepsilon^0}\\
       &= \mu\paren*{A \setminus A^0}\\
       &\geq (\alphamax + \epsilon) \muone\paren*{\Aepsilon \setminus \Aepsilon^0}\\
       &= \delta \paren*{\muone \paren*{\Aepsilon \setminus \Aepsilon^0} + \muone \paren*{\Aepsilon^0}} \tag{because $\muone\paren*{\Aepsilon^0}=0$}\\
       &=\delta \muone(\Aepsilon)
\end{align*}
This contradicts the given statement. Hence $\muone\paren*{\Aepsilon^0}>0$.
Because $\Aepsilon^0 \subseteq \Bepsilon$, it follows that $\muone\paren*{\Bepsilon}>0$ as well.  

\textbf{Part 3:} Now we show that $\inf R(\lammix,\lamone) \leq \alphamax$, where $\lammix,\lamone$ are probability measures induced by $\mu$, $\muone$, respectively, under $\taudiv$. 
Because $\Bepsilon$ is the inverse image of $\Tepsilon$ under $\taudiv$, $\lammix(\Tepsilon)=\mumix\paren*{\Bepsilon}$ and $\lamone(\Tepsilon)=\muone\paren*{\Bepsilon}$. Hence $\lammix(\Tepsilon)\le(\alphamax +\epsilon)\lamone(\Tepsilon)$ and $\lamone(\Tepsilon)>0$. Now, because $(\alphamax +\epsilon)\ge\nicefrac{\lammix(\Tepsilon) }{\lamone(\Tepsilon)} \in R(\lammix,\lamone$), $\inf R(\lammix,\lamone) \le \alphamax + \epsilon$. This is true for all $\epsilon > 0$. Thus $\inf R(\lammix,\lamone) \leq \alphamax$. 

\textbf{Part 4:}
Finally, we show that $\alphamaxtau \leq \alphamax$. Because $\numix, \nuone$ are probability measures induced by $\mu,\muone$, respectively, under the transformation $\tau = H \circ \taudiv$, $\lammix, \lamone$ also induce $\numix, \nuone$, respectively, under $H$. Let $H(\Tepsilon)=\Tepsiloni$. Because $H$ is one-to-one, $\Tepsilon$ is the inverse image of $\Tepsiloni$ and $\numix(\Tepsiloni)=\lammix(\Tepsilon),\nuone(\Tepsiloni)=\lamone(\Tepsilon)$. Now, because $\nuone(\Tepsiloni)=\lamone(\Tepsilon)>0$  and $(\alphamax +\epsilon)\ge\nicefrac{\lammix(\Tepsilon)}{\lamone(\Tepsilon)}= \nicefrac{\numix(\Tepsiloni) }{\nuone(\Tepsiloni)} \in R(\numix,\nuone)$, $\alphamaxtau = \inf R(\numix,\nuone) \le \alphamax + \epsilon$. This is true for all $\epsilon > 0$. Thus $\alphamaxtau \leq \alphamax$.  
\end{proof}

\section{Related work}

The problem of class prior estimation appears in a variety of forms and learning contexts. Early classification approaches generally operated under the umbrella of sample selection bias theory \citep{Heckman1979, Cortes2008}, where class priors but not the class-conditional distributions differ among labeled and unlabeled data. These methods assume the existence of both positives and negatives in the labeled set and estimate class priors using various forms of iterative learning \citep{Latinne2001, Vucetic2001, Saerens2002}. Interestingly, the expectation-maximization (EM) approach by \citet{Latinne2001} and \citet{Saerens2002} can be reformulated as minimization of Kullback-Leibler distance between labeled and unlabeled data \citep{duPlessis2012}, resulting in a convex objective. This formulation further allows distribution matching to be generalized to other distance functions, such as the Pearson divergence \citep{Pearson1900}.

The positive-unlabeled scenario was specifically considered by \citet{Elkan2008} and \citet{Phillips2009} who investigated the relationship between traditional classifiers (between positive and negative data) and non-traditional classifiers (between labeled and unlabeled data). Assuming that a non-traditional classifier can learn the posterior probability that a data point is labeled, \citet{Elkan2008} proposed estimators for class priors in the unlabeled data. This approach, however, holds strictly only for class-conditional distributions with disjoint supports; see derivation for $g(x)$ on p.~214 in \citet{Elkan2008}. Their strategy can similarly be reformulated as minimization of the Pearson divergence between labeled data scaled by the unknown class prior and unlabeled data \citep{duPlessis2014b}. This partial distribution matching leads to a compact solution, although it still requires a non-linear fitting step \citep{duPlessis2014b}. Nevertheless, these methods are equivalent in that they minimize the same objective. Another work from this group of methods uses the principles of the EM algorithm to maximize the conditional likelihood of a logistic regression model \citep{Ward2009}. \citet{Ward2009} also provide restrictive conditions that ensure identifiability of class priors and investigate the variance of estimates.

\citet{Scott2009} and \citet{Blanchard2010} provide an extensive theoretical treatment of the subject. Two of their results are particularly relevant for our work: (i) They provide a general non-identifiability result that applies to any probability distribution and also show the existence of $\alphamax$ \citep{Blanchard2010}. However, as mentioned earier, they do not specifically identify $A(\mumix,\muone,\pzeroall)$ as an interval and recognize its relevance for developing practical estimators. (ii) They propose an estimator for $1-\alphamax$ as an infimum over a set of functions. This is a theoretically important result, although to our knowledge this estimator does not lead to an algorithm to compute $\alphamax$ in practice. These results have been recently extended to the cases of classification with asymmetric label noise \citep{Scott2013}. 

Estimation of class priors can also be seen as an instance of parameter learning in two-component mixture models. Here, an extensive and well-studied group of algorithms is available, predominantly based on the EM algorithm \citep{Dempster1977} and its many variants \citep{McLachlan2000}. The identifiability of finite mixtures has been thoroughly studied; e.g., see \cite{Yakowitz1968} and \cite{Tallis1982}. Using samples from both the mixture and component one simplifies the estimation problem; to our knowledge, however, solutions can only be generalized to parametric families. The unsupervised view is attractive because it also ties class prior estimation with hypothesis testing and false discovery rate estimation in statistics \citep{Storey2003, Geurts2011, Ghosal2011}. 

A number of additional supervised approaches have been proposed to address the problems of learning from positive and unlabeled data. Generally, however, most authors are primarily interested in improving accuracy of traditional classification models \citep{Denis1998, Liu2003, Lee2003, Yu2004, Zhang2005}. Various other forms of one-class classification methods and outlier detection can also be used for learning and inference, although the evidence suggests that these strategies are generally inferior \citep{Manevitz2001}. In summary, this variety of approaches suggests deep connections between class prior estimation, hypothesis-testing \citep{Storey2003, Scott2005, Geurts2011, Beana2013}, learning from positive and unlabeled data \citep{Elkan2008}, and cost-sensitive learning \citep{Elkan2001, duPlessis2014}.

\section{Empirical investigation}

In this section, we investigate the practical properties of our approach to estimating the mixing proportion in a controlled, synthetic setting, and subsequently on real-life data. 

\subsection{Experiments on synthetic data}
We explore the impact of the mixing proportion, the separation between the mixing components, and the size of the component sample on the accuracy of estimation. In all experiments, $\alpha$ was varied from $\{0.05, 0.25, 0.50, 0.75, 0.95 \}$, the size of the component sample $X_1$ was varied from $\{100,1000\}$, whereas the size of the mixture sample $X$ was fixed at $10000$. For each set of parameters, $\alpha$ was estimated 50 times from a randomly generated data set. A two-sample t-test was used to estimate the statistical significance that one algorithm was a better estimator than another. A P-value threshold of 0.05 and the Bonferroni correction were used to declare an algorithm a winner over all other algorithms.

\textbf{Data sets: } The univariate data was generated from the mixture of two unit-variance Gaussian distributions and from two unit-scale Laplace distributions, with varying means ($\Delta \mu \in \{1,2,4\}$) in both settings. In the multivariate case, we used the waveform data generator \citep{Breiman1984} adjusted for binary classification ($\xdim = 21$) and have constructed a ten-dimensional sphere of radius one inscribed into a cube with a side of length four, with near-uniformly generated positive (inside the sphere) and negative (outside the sphere) samples ($\xdim = 10$). To apply our model, we have initially trained a binary classifier between positive and unlabeled data and used the distributions of its predictions (through cross-validation) to construct samples $X_1$ and $X$. These univariate samples were subsequently provided to our estimator of mixing proportions.

\subsection{Experiments on real-life data}
We downloaded twelve real-life data sets from the UCI Machine Learning Repository \citep{Lichman2013}. If necessary, categorical features were transformed into numerical using sparse binary representation, the regression problems were transformed into classification based on the mean of the target variable, and the multi-class classification problems were converted into binary by combining original classes. In each data set, a subset of 1000 positive examples (or 100 for smaller data sets) was randomly selected to provide a sample $X_1$ while the remaining data (without class labels) were used as unlabeled data (sample $X$). The true class prior corresponded to the fraction of positives in sample $X$. Note that this experiment differs from the setup by \citet{Elkan2008} in that we use $X$ to to define true class prior, whereas Elkan and Noto use the fraction of positives in $X \cup X_1$. We made an appropriate conversion to obtain comparable results. 

Each experiment was repeated 50 times for a random selection of 1000 positives from the original data set, except for the four smaller data sets where it was set to 100. The maximum data set size was limited at 10000 for the large data sets; in each such case, the positive and negative examples were sampled in a stratified manner. As in the case of synthetic data, a binary classifier between positive and unlabeled data was used to provide univatiate distributions of prediction scores to construct samples $X_1$ and $X$. These univariate samples were subsequently provided to our estimator.

\subsection{Algorithms}
We compared \AlgName\ to two known algorithms for estimating the mixing proportion and two baseline algorithms suggested by Lemma \ref{lem:Ah}. The first algorithm includes Gaussian mixture models (GMM), trained using expectation-maximization (EM). The GMM algorithm was used on $X$, while $X_1$ was used to select the mixing proportion between $\alpha$ and $1 - \alpha$ based on the distance between the inferred means to the centroid of $X_1$. In the case of multivariate data, we first applied the multivariate-to-univariate transformation and then used GMM to infer class priors. We refer to this algorithm as transformed GMM (GMM-T) to distinguish it from the GMM that would be directly applied to multivariate data. The second algorithm includes the \elkan\ method \citep{Elkan2008}. There are two potential variants of the \elkan\ estimator: the main estimator as described in Eq.~4 in \citep{Elkan2008} and the alternative estimator. Each of these estimators can work with three different estimates of the probability $p(S = 1|Y = 1)$, referred to as $e_1$, $e_2$, and $e_3$ in \citep{Elkan2008}, and each of these estimators can use any classifier to learn $p(S = 1|x)$. We have tested a bagged ensemble of $100$ two-layer feed-forward neural networks, each with five hidden units, and a support vector machine with a quadratic kernel and Platt's post-processing \citep{Platt1999}. These models performed similarly well; thus, we only report the results corresponding to the neural network ensembles. In addition, since estimator $e_3$ was significantly inferior to $e_1$ and $e_2$, and since $e_1$ was slightly better than $e_2$ in performance, we only report the results for the $e_1$ estimator. Finally, the alternative estimator had a better performance than the main estimator. Therefore, the results for the \elkan\ algorithm correspond to the alternative estimator using an ensemble of neural networks and the $e_1$ estimate of $p(S = 1|Y = 1)$. The algorithm proposed by \cite{duPlessis2014b} minimizes the same objective as the $e_1$ estimator and, thus, was not used in our experiments. Note that the \elkan\ algorithm may output class priors greater than 1, which can occur when the posterior probability $p(S = 1|x)$ is not accurately learned. 
In the case of \AlgName, we estimated the densities using histograms. The bin-width was chosen to cover the component sample's (after the transformation) range and reveal the shape of its distribution, using the default option in Matlab's \verb+histogram+ function. More bins with the same bin-width were subsequently added to cover the mixture sample's range. We explored two combinations of coefficients $(\gamma, \gamma_{1})$ as specified in \autoref{eq:LLGamma}. The combination $(\nicefrac{1}{|X|}, \nicefrac{1}{|X_1|})$ resulted in slightly better performance than $(1,1)$; thus, we only present the results when the mixture sample and the component sample were equally weighted.

The two baseline algorithms follow from the insights about $\alphamax$ derived in Lemma \ref{lem:Ah}. The first, which we call the \pdfratio\ approach, uses the fact that $\alphamax = \inf R(\mixf, \cone)$, suggesting the approximation $\hat{\alpha} = \min_{x_i \in X_1} \nicefrac{\hat{\mixf}(x_i)}{\hat{\cone}(x_i)}$. The second, which we call the \cdfratio\ approach, uses the fact that $\alphamax$ can be approximated using the cdf $F$ of $\mixf$ and the cdf $F_1$ of $\cone$. To see why, consider the function $\nicefrac{(\mixf-\alpha \cone)}{1-\alpha}$. This function is a pdf, provided the numerator is nonnegative, because it integrates to $1$. The numerator is nonnegative for $\alpha \leq \alphamax$ but not for $\alpha > \alphamax.$ Consequently, $\alphamax$ is the largest value for which $\nicefrac{(F-\alpha F_1)}{(1-\alpha)}$ is a cdf. Finding this $\alphamax$ corresponds to finding the largest $\alpha$ for which $F-\alpha F_1$ is nonnegative and non-decreasing. To execute this search, we obtain estimates of the cdfs, $\widehat{F}_1$ and $\widehat{F}$, and discretize the problem by using only the values of $\widehat{F}-\alpha \widehat{F}_1$ evaluated on the sample from the component. To check for the non-decreasing property, we apply the first finite difference operator on these values to check if the result is nonnegative. The results corresponding to these estimators are provided in Tables \ref{tab:GL_pdf}-\ref{tab:RD_pdf} at the end of this Section. 

To find the optimal $\betavec$ for each level set in Algorithm \ref{alg_main}, we used an interior-point method. The implementation for the minimizer was \verb+fmincon+ in Matlab. The AlphaMax level set optimization is quite simple, with a linear equality constraint and a concave maximization. Consequently, we found that the other optimization approaches we explored, including LBFGS, did not provide any gains. We would like to mention that, at times, we observe uncharacteristically small log-likelihood for the optimal $\betavec$ returned by the optimization routine because of numerical instability; this happens only when mixing proportions approach 0. We correct the log-likelihoods by imposing the constraint that the log-likelihood should be non-increasing with respect to the mixing proportion, which is consistent with the theory.
%

\subsection{Results}
In \autoref{tab:GL} we show mean absolute error from the true mixing proportion over the Gaussian and Laplace data sets and multiple parameter values. We varied the value of the mixing proportion as well as the size of the component sample and compared \AlgName, \elkan, and GMM algorithms. The performance of all three methods was good, with methods generally having more difficulties on poorly separated component distributions, particularly for the small size of the component sample. Since all data sets were univariate, in some cases with well-separated components, we did not expect \AlgName \ to outperform other algorithms. This is particularly the case for the data sets containing mixtures of Gaussian distributions for which the EM algorithm was designed for. Nevertheless, the results provide evidence that \AlgName \ performs well on data sets with low separation between mixing components.

In \autoref{fig:GL}, we visualize the variance of estimates by providing box plots over a set of data sets and true mixing proportions. \autoref{fig:GL} also shows the log-likelihood plots on a random selection of one of the 50 data sets. As mentioned earlier, the log-likelihood functions were used by our automated procedure to identify the inflection point; i.e., a point at the end of the initial flat region. Although the development of an automated procedure is important, we observe that these log-likelihood plots readily provide a useful tool for practitioners to visually select the mixing proportion and gain insight into the reliability of the estimate. We anticipate that these plots might be preferred in settings where a single data set is considered.
\renewcommand{\multirowsetup}{\centering}
\newlength{\LL} \settowidth{\LL}{Data}
\setlength{\tabcolsep}{3pt}


\begin{table}[t]
\centering
\caption{Mean absolute difference between estimated and true mixing proportion over a selection of true mixing proportions and the following data sets: $\mathcal{N}$ = Gaussian with $\Delta \mu \in \{1, 2, 4\}$, $\mathcal{L}$ = Laplace with $\Delta \mu \in \{1, 2, 4\}$. Statistical significance was evaluated by comparing the AlphaMax method, the Elkan-Noto algorithm, and the Gaussian Mixture Model (GMM). The bold font type indicates the winner and the asterisk indicates statistical significance.}
\footnotesize
\tracingtabularx
\begin{tabularx}{0.70\linewidth}
{|>{\setlength{\hsize}{0.19000\hsize}}X|
>{\setlength{\hsize}{0.10000\hsize}}X|
>{\setlength{\hsize}{0.121667\hsize}}X|
>{\setlength{\hsize}{0.121667\hsize}}X|
>{\setlength{\hsize}{0.121667\hsize}}X|
>{\setlength{\hsize}{0.121667\hsize}}X|
>{\setlength{\hsize}{0.121667\hsize}}X|
>{\setlength{\hsize}{0.121667\hsize}}X|
}\hline 
\multirow{2}{1\LL}{Data} &
& \multicolumn{2}{c|}{ AlphaMax }
& \multicolumn{2}{c|}{ Elkan-Noto }
& \multicolumn{2}{c|}{ GMM }
\\
 \cline{3-8}& $\alpha$ & 100 & 1000 & 100 & 1000 & 100 & 1000 \\
  \hline
  \hline
  \multirow{5}{1\LL}{$\mathcal{N}$\\\mbox{$(\Delta \mu =1)$}}  & 0.050 & \textbf{0.154}* & \textbf{0.149}* & 0.465 & 0.463 & 0.261 & 0.378
 \\
 & 0.250 & \textbf{0.166} & 0.177 & 0.485 & 0.435 & 0.230 & \textbf{0.137}*
 \\
 & 0.500 & \textbf{0.178}* & 0.213 & 0.400 & 0.352 & 0.265 & \textbf{0.133}*
 \\
 & 0.750 & \textbf{0.201} & \textbf{0.102}* & 0.279 & 0.205 & 0.246 & 0.156
 \\
 & 0.950 & 0.262 & 0.119 & \textbf{0.126}* & \textbf{0.051}* & 0.277 & 0.165
 \\
\hline
\multirow{5}{1\LL}{$\mathcal{N}$\\\mbox{$(\Delta \mu =2)$}}  & 0.050 & 0.028 & 0.028 & 0.105 & 0.136 & \textbf{0.014}* & \textbf{0.011}*
 \\
 & 0.250 & 0.077 & 0.073 & 0.178 & 0.164 & \textbf{0.037}* & \textbf{0.019}*
 \\
 & 0.500 & 0.090 & 0.078 & 0.211 & 0.156 & \textbf{0.058}* & \textbf{0.030}*
 \\
 & 0.750 & \textbf{0.086} & 0.062 & 0.201 & 0.112 & 0.106 & \textbf{0.034}*
 \\
 & 0.950 & 0.216 & 0.050 & \textbf{0.112} & 0.037 & 0.121 & \textbf{0.036}
 \\
\hline
\multirow{5}{1\LL}{$\mathcal{N}$\\\mbox{$(\Delta \mu =4)$}}  & 0.050 & 0.004 & 0.004 & 0.011 & 0.013 & \textbf{0.002}* & \textbf{0.001}*
 \\
 & 0.250 & 0.016 & 0.005 & 0.032 & 0.018 & \textbf{0.005}* & \textbf{0.002}*
 \\
 & 0.500 & 0.021 & 0.006 & 0.063 & 0.022 & \textbf{0.007}* & \textbf{0.003}*
 \\
 & 0.750 & 0.041 & 0.013 & 0.107 & 0.016 & \textbf{0.008}* & \textbf{0.003}*
 \\
 & 0.950 & 0.171 & 0.015 & 0.103 & 0.013 & \textbf{0.017}* & \textbf{0.002}*
 \\
\hline
\hline
 \multirow{5}{1\LL}{$\mathcal{L}$\\\mbox{$(\Delta \mu =1)$}}  & 0.050 & \textbf{0.234}* & \textbf{0.261}* & 0.404 & 0.390 & 0.634 & 0.666
 \\
 & 0.250 & \textbf{0.205}* & \textbf{0.219}* & 0.404 & 0.361 & 0.379 & 0.396
 \\
 & 0.500 & \textbf{0.174}* & \textbf{0.167}* & 0.370 & 0.297 & 0.366 & 0.404
 \\
 & 0.750 & \textbf{0.224} & \textbf{0.087}* & 0.275 & 0.187 & 0.245 & 0.235
 \\
 & 0.950 & 0.467 & 0.171 & \textbf{0.123}* & \textbf{0.051}* & 0.398 & 0.186
 \\
\hline
\multirow{5}{1\LL}{$\mathcal{L}$\\\mbox{$(\Delta \mu =2)$}}  & 0.050 & 0.080 & 0.071 & 0.124 & 0.120 & \textbf{0.059} & \textbf{0.011}*
 \\
 & 0.250 & 0.086 & 0.080 & 0.164 & 0.128 & \textbf{0.057}* & \textbf{0.061}*
 \\
 & 0.500 & \textbf{0.074}* & \textbf{0.068}* & 0.198 & 0.119 & 0.235 & 0.205
 \\
 & 0.750 & \textbf{0.059}* & \textbf{0.050}* & 0.204 & 0.088 & 0.198 & 0.177
 \\
 & 0.950 & 0.229 & 0.040 & 0.127 & 0.041 & \textbf{0.106} & \textbf{0.034}
 \\
\hline
\multirow{5}{1\LL}{$\mathcal{L}$\\\mbox{$(\Delta \mu =4)$}}  & 0.050 & \textbf{0.004}* & \textbf{0.004}* & 0.016 & 0.020 & 0.015 & 0.014
 \\
 & 0.250 & 0.014 & \textbf{0.009}* & 0.040 & 0.026 & \textbf{0.010} & 0.012
 \\
 & 0.500 & 0.028 & 0.014 & 0.087 & 0.028 & \textbf{0.009}* & \textbf{0.005}*
 \\
 & 0.750 & 0.038 & 0.009 & 0.140 & 0.026 & \textbf{0.013}* & \textbf{0.003}*
 \\
 & 0.950 & 0.194 & \textbf{0.014} & 0.126 & 0.015 & \textbf{0.073}* & 0.074
 \\
\hline
\end{tabularx}
\label{tab:GL}
\normalsize
\end{table}

\autoref{tab:BW} and Figure \ref{fig:WB} show the performance over two synthetic data sets in which the true mixing proportion was systematically varied. The results suggest sensitivity of all estimation procedures for small component samples and generally good performance for large component samples. The strong performance of both Elkan-Noto and GMM-T models is similarly expected since the data sets show good separation between positive and negative examples, resulting in very high areas under the ROC curve (not shown). The results on these synthetic data sets suggest that the multivariate-to-univariate transformation did not negatively influence AlphaMax and GMM algorithms.


\begin{table}[t]
\centering
\caption{Mean absolute difference between estimated and true mixing proportion over a selection of true mixing proportions and the following data sets: $\mathcal{W}$ = waveform, and $\mathcal{B}$ = ball in the box. Statistical significance was evaluated by comparing the \AlgName\ method, the \elkan\ algorithm, and the Gaussian mixture model after applying multivariate-to-univariate transforms (GMM-T). The bold font type indicates the winner and the asterisk indicates statistical significance.}
\footnotesize
\tracingtabularx
\begin{tabularx}{0.65\linewidth}
{|>{\setlength{\hsize}{0.110000\hsize}}X|
>{\setlength{\hsize}{0.10000\hsize}}X|
>{\setlength{\hsize}{0.131667\hsize}}X|
>{\setlength{\hsize}{0.131667\hsize}}X|
>{\setlength{\hsize}{0.131667\hsize}}X|
>{\setlength{\hsize}{0.131667\hsize}}X|
>{\setlength{\hsize}{0.131667\hsize}}X|
>{\setlength{\hsize}{0.131667\hsize}}X|
}\hline 
\multirow{2}{1\LL}{Data} &
& \multicolumn{2}{c|}{ AlphaMax }
& \multicolumn{2}{c|}{ Elkan-Noto }
& \multicolumn{2}{c|}{ GMM-T }
\\
 \cline{3-8}& $\alpha$ & 100 & 1000 & 100 & 1000 & 100 & 1000 \\
  \hline
  \hline

\multirow{5}{1\LL}{$\mathcal{B}$}  & 0.050 & \textbf{0.004}* & 0.004 & 0.029 & 0.021 & 0.024 & \textbf{0.004}
 \\
 & 0.250 & \textbf{0.022} & 0.027 & 0.067 & 0.033 & 0.024 & \textbf{0.002}*
 \\
 & 0.500 & \textbf{0.027} & 0.014 & 0.110 & 0.040 & 0.036 & \textbf{0.001}*
 \\
 & 0.750 & \textbf{0.126} & 0.017 & 0.142 & 0.039 & 0.214 & \textbf{0.003}*
 \\
 & 0.950 & 0.392 & 0.030 & 0.104 & 0.024 & \textbf{0.021}* & \textbf{0.019}
 \\
\hline
\multirow{5}{1\LL}{$\mathcal{W}$}  & 0.050 & \textbf{0.004}* & \textbf{0.004}* & 0.059 & 0.059 & 0.057 & 0.147
 \\
 & 0.250 & \textbf{0.088}* & \textbf{0.038}* & 0.316 & 0.106 & 0.468 & 0.097
 \\
 & 0.500 & \textbf{0.259}* & 0.050 & 0.573 & 0.126 & 0.450 & \textbf{0.048}
 \\
 & 0.750 & 0.379 & \textbf{0.052} & 0.779 & 0.121 & \textbf{0.210}* & 0.087
 \\
 & 0.950 & 0.412 & 0.261 & 0.688 & \textbf{0.049} & \textbf{0.015}* & 0.059
 \\
\hline
\end{tabularx}
\normalsize
\label{tab:BW}
\end{table}

\newcommand{\figwidth}{\textwidth}

\begin{figure*}[]
\includegraphics[width = \figwidth]
{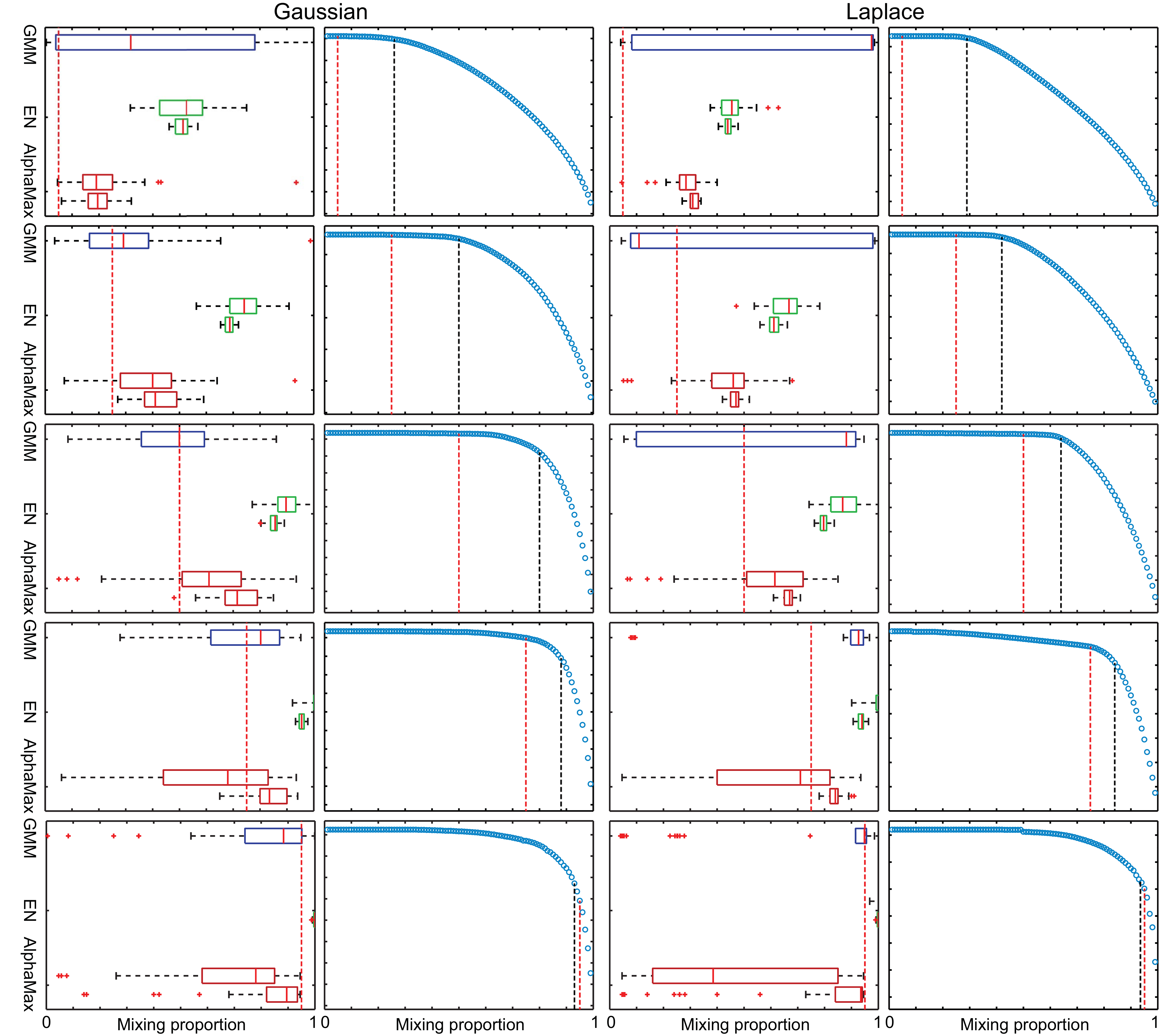}
\caption{Box plots and log-likelihood plots for the two unit-variance Gaussian distributions and two unit-scale Laplace distributions, with the separation of means $\Delta \mu = 1$. Each row corresponds to a different mixing proportion $\alpha \in \{0.05, 0.25, 0.50, 0.75, 0.95 \}$. The box plots show the performance of \AlgName, \elkan, and GMM algorithms, where \AlgName\ and \elkan\ were shown when $|X_1| = 100$ and $|X_1| = 1000$, top to bottom. The log-likelihood plots illustrate the selection of inflection points by \AlgName\ for one randomly selected example from the panel left to it. The red dashed line in each plot shows the true mixing proportion, whereas the black dashed line in the log-likelihood plots shows the mixing proportion selected in that particular example.}
\label{fig:GL} 
\end{figure*}

\begin{figure*}[]
\includegraphics[width = \figwidth]
{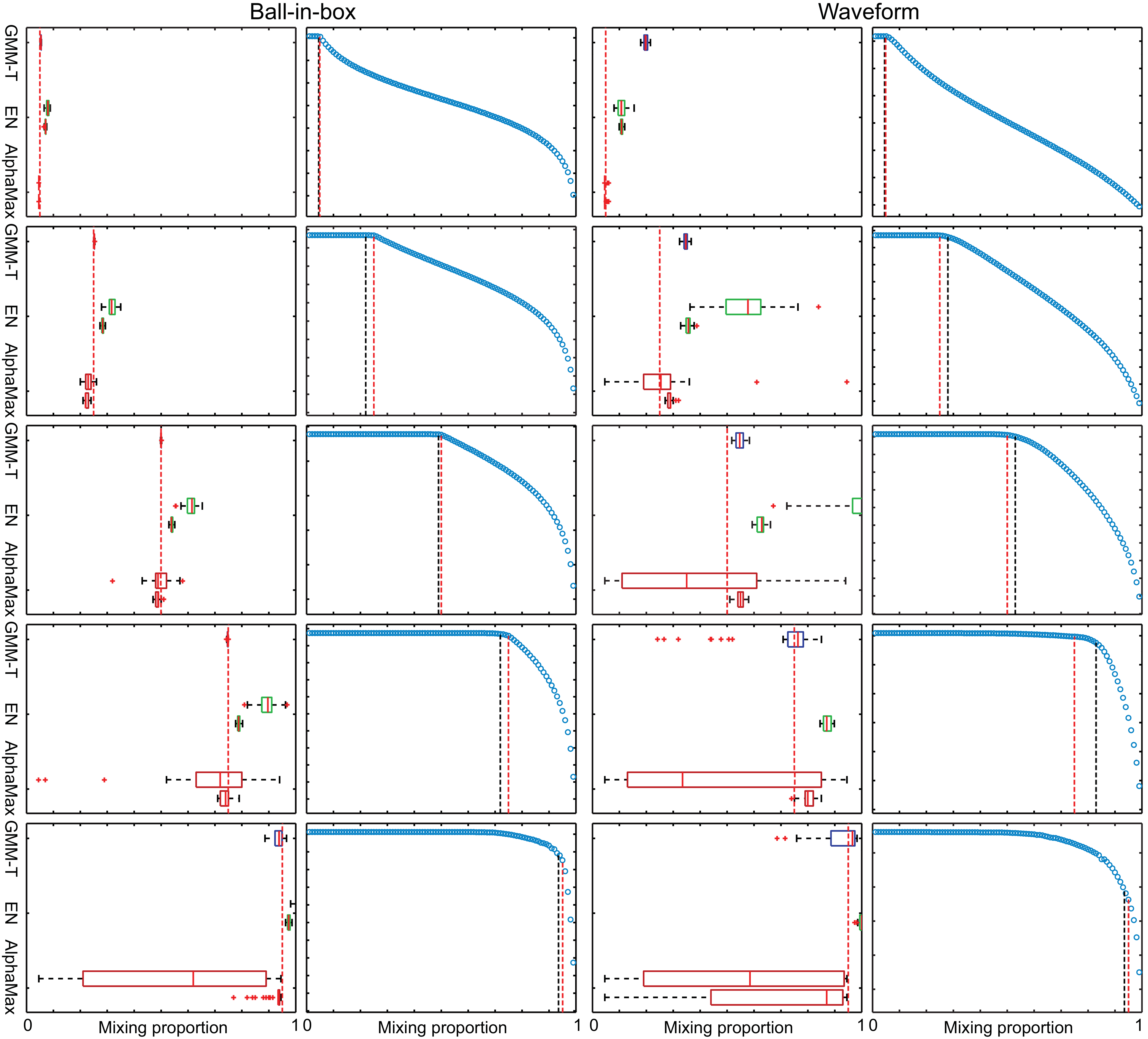}
\caption{Box plots and log-likelihood plots for the Ball-in-box data set ($d = 10$) and Waveform data set ($d = 21$). Each row corresponds to a different mixing proportion $\alpha \in \{0.05, 0.25, 0.50, 0.75, 0.95 \}$. The box plots show the performance of \AlgName, \elkan, and GMM algorithms, where \AlgName\ and \elkan\ were shown when $|X_1| = 100$ and $|X_1| = 1000$, top to bottom. The log-likelihood plots illustrate the selection of inflection points by \AlgName\ for one randomly selected example from the panel left to it. The red dashed line in each plot shows the true mixing proportion, whereas the black dashed line in the log-likelihood plots shows the mixing proportion selected in that particular example.}
\label{fig:WB} 
\end{figure*}

\begin{figure*}[]
\includegraphics[width = \figwidth]
{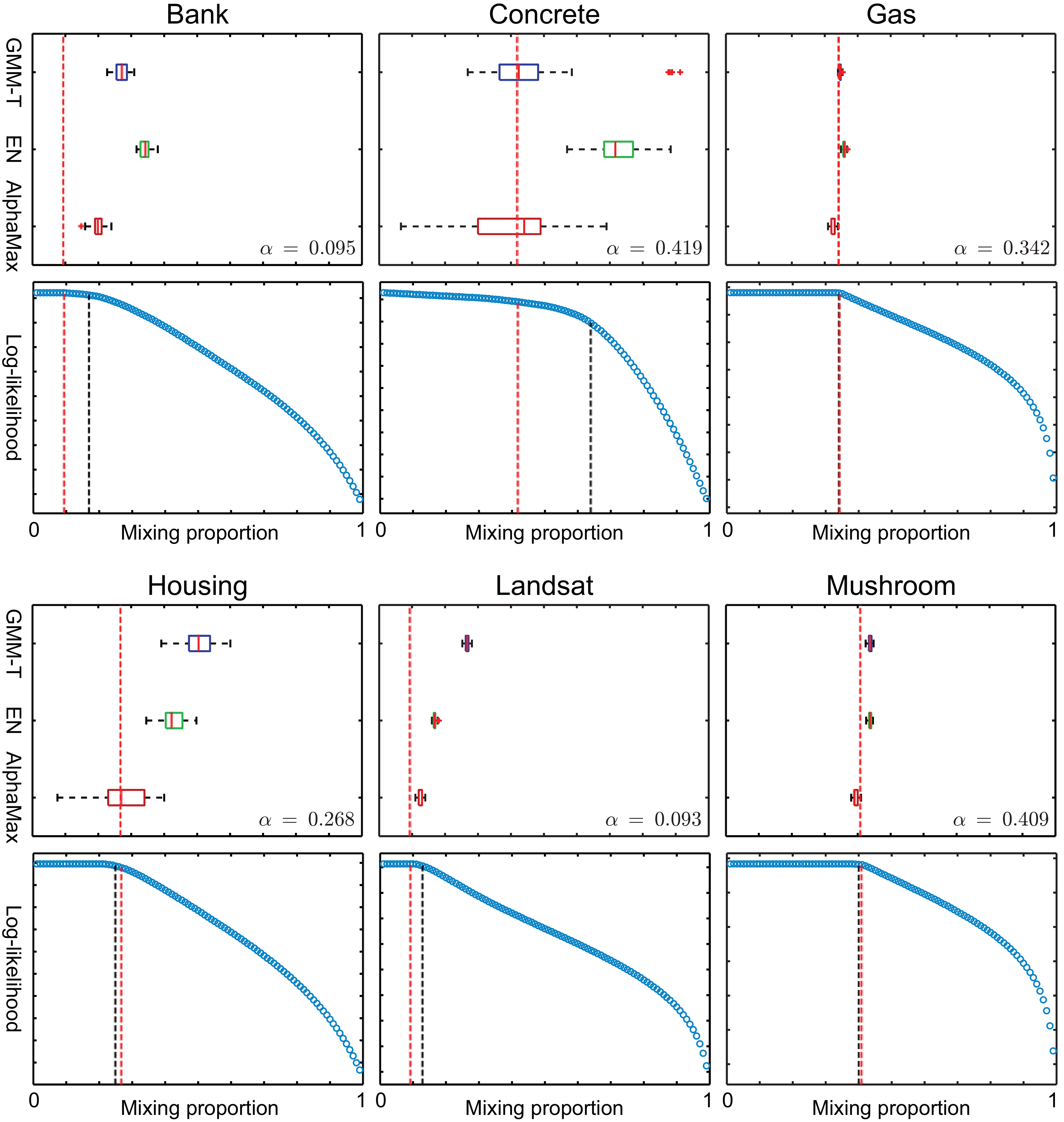}
\caption{Box plots and log-likelihood plots for six data sets from the UCI Machine Learning Repository: Bank, Concrete, Gas, Housing, Landsat, and Mushroom. The box plots show the performance of \AlgName, \elkan, and GMM-T algorithms. The log-likelihood plots illustrate the selection of inflection points by \AlgName\ for one randomly selected example from the panel above it. The red dashed line in each plot shows the true mixing proportion, whereas the black dashed line in the log-likelihood plots shows the mixing proportion selected in that particular example.}
\label{fig:RD1}
\end{figure*}

\begin{figure*}[]
\includegraphics[width = \figwidth]
{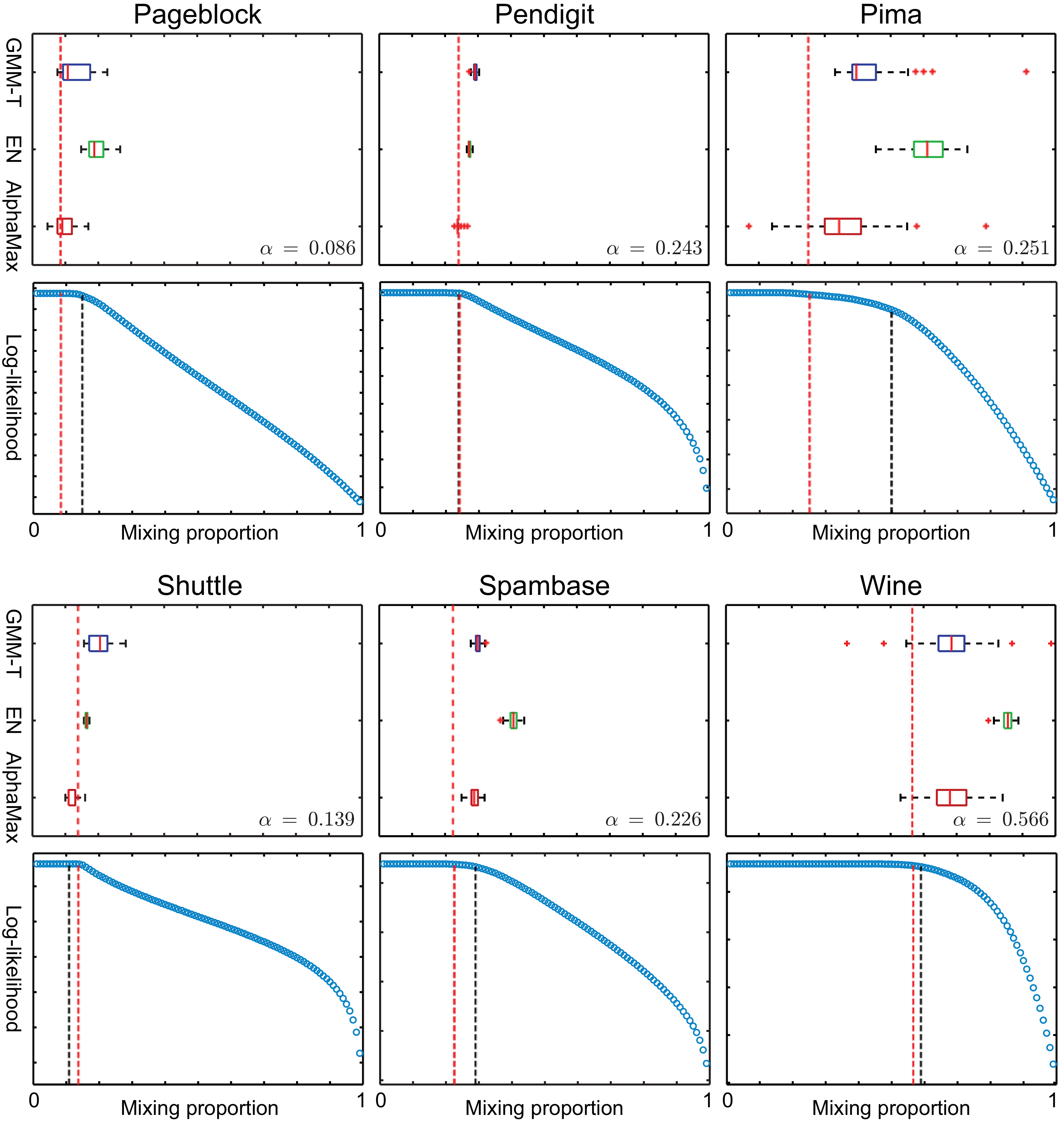}
\caption{Box plots and log-likelihood plots for six data sets from the UCI Machine Learning Repository: Pageblock, Pendigit, Pima, Shuttle, Spambase, and Wine. The box plots show the performance of \AlgName, \elkan, and GMM-T algorithms. The log-likelihood plots illustrate the selection of inflection points by \AlgName\ for one randomly selected example from the panel above it. The red dashed line in each plot shows the true mixing proportion, whereas the black dashed line in the log-likelihood plots shows the mixing proportion selected in that particular example.}
\label{fig:RD2} 
\end{figure*}


Finally, \autoref{tab:RD} shows the mean absolute error from the true mixing proportion over twelve real-life data sets from the UCI Machine Learning Repository. Here, \AlgName \ outperformed the remaining methods on ten data sets, of which eight results were statistically significant. Figures \ref{fig:RD1}-\ref{fig:RD2} additionally show the box plots for \AlgName, \elkan, and GMM-T methods, as well as the log-likelihood plots for \AlgName. All results provide evidence of the strong potential of \AlgName\ for accurate estimation of class priors in real-life situations.

\begin{table}
\caption{Mean absolute difference between estimated and true mixing proportion over twelve data sets from the UCI Machine Learning Repository. Statistical significance was evaluated by comparing the \AlgName\ method, the \elkan\ algorithm, and the Gaussian mixture model after applying multivariate-to-univariate transforms (GMM-T). The bold font type indicates the winner and the asterisk indicates statistical significance. For each data set, shown are the true mixing proportion ($\alpha$), the dimensionality of the sample ($d$), the number of positive examples ($n_1$), and the total number of examples ($n$).}
\footnotesize
\begin{centering}
\begin{tabular}{|c|c|c|c|c|c|c|c|}
\hline 
Data set & $\alpha$ & $d$ & $n_1$ & $n$ & \AlgName & Elkan-Noto & GMM-T\\
\hline 
\hline 
Bank  & 0.095 & 13 & 5188 & 45000 & \textbf{0.105}* & 0.247 & 0.177
 \\
\hline
Concrete  & 0.419 & 8 & 490 & 1030 & 0.118 & 0.299 & \textbf{0.100} 
 \\
\hline
Gas  & 0.342 & 127 & 2565 & 5574 & 0.018 & 0.017 & \textbf{0.005}*
 \\
\hline
Housing  & 0.268 & 13 & 209 & 506 & \textbf{0.062}* & 0.158 & 0.240 
 \\
\hline
Landsat  & 0.093 & 36 & 1508 & 6435 & \textbf{0.032}* &0.075 & 0.174
 \\
\hline
Mushroom  & 0.409 & 126 & 3916 & 8124 & \textbf{0.016}* & 0.028 & 0.028
 \\
\hline
Pageblock  & 0.086 & 10 & 560 & 5473 & \textbf{0.025}* & 0.109 & 0.046 
 \\
\hline
Pendigit  & 0.243 & 16 & 3430 & 10992 & \textbf{0.007}* & 0.031 & 0.048
 \\
\hline
Pima  & 0.251 & 8 & 268 & 768 & \textbf{0.129}* & 0.357 & 0.183
 \\
\hline
Shuttle  & 0.139 & 9 & 8903 & 58000 & \textbf{0.023} & 0.026 & 0.068
 \\
\hline
Spambase  & 0.226 & 57 & 1813 & 4601 & \textbf{0.066}* & 0.182 & 0.074
 \\
\hline

Wine  & 0.566 & 11 & 4113 & 6497 & \textbf{0.122} & 0.288 & 0.130
 \\
 \hline 
\end{tabular}
\normalsize
\par\end{centering}
\label{tab:RD}
\end{table}

\begin{table}[t]
\centering
\caption{Mean absolute difference between estimated and true mixing proportion over a selection of true mixing proportions and the following data sets: $\mathcal{N}$ = Gaussian with $\Delta \mu \in \{1, 2, 4\}$, $\mathcal{L}$ = Laplace with $\Delta \mu \in \{1, 2, 4\}$. Statistical significance was evaluated by comparing the \AlgName\ method, the \pdfratio\ method, and the \cdfratio\ method. The bold font type indicates the winner and the asterisk indicates statistical significance.}
\footnotesize
\tracingtabularx
\begin{tabularx}{0.70\linewidth}
{|>{\setlength{\hsize}{0.19000\hsize}}X|
>{\setlength{\hsize}{0.10000\hsize}}X|
>{\setlength{\hsize}{0.121667\hsize}}X|
>{\setlength{\hsize}{0.121667\hsize}}X|
>{\setlength{\hsize}{0.121667\hsize}}X|
>{\setlength{\hsize}{0.121667\hsize}}X|
>{\setlength{\hsize}{0.121667\hsize}}X|
>{\setlength{\hsize}{0.121667\hsize}}X|
}\hline 
\multirow{2}{1\LL}{Data} &
& \multicolumn{2}{c|}{ \AlgName }
& \multicolumn{2}{c|}{ \pdfratio }
& \multicolumn{2}{c|}{ \cdfratio }
\\
 \cline{3-8}& $\alpha$ & 100 & 1000 & 100 & 1000 & 100 & 1000 \\
  \hline
  \hline
 \multirow{5}{1\LL}{$\mathcal{N}$\\  \mbox{$(\Delta \mu =1)$}}  & 0.050 & 0.154 & 0.149 & \textbf{0.041} & \textbf{0.032}* & 0.058 & 0.042 
 \\ 
 & 0.250 & 0.166 & 0.177 & 0.116 & 0.092 & \textbf{0.089}* & \textbf{0.052}* 
 \\ 
 & 0.500 & 0.178 & 0.213 & 0.182 & 0.213 & \textbf{0.156} & \textbf{0.098}* 
 \\ 
 & 0.750 & \textbf{0.201} & \textbf{0.102}* & 0.315 & 0.308 & 0.264 & 0.176 
 \\ 
 & 0.950 & \textbf{0.262}* & \textbf{0.119}* & 0.447 & 0.449 & 0.355 & 0.202 
 \\ 
\hline 
\multirow{5}{1\LL}{$\mathcal{N}$\\  \mbox{$(\Delta \mu =2)$}}  & 0.050 & 0.028 & 0.028 & 0.026 & 0.037 & \textbf{0.014}* & \textbf{0.012}* 
 \\ 
 & 0.250 & 0.077 & 0.073 & 0.110 & 0.116 & \textbf{0.077} & \textbf{0.044}* 
 \\ 
 & 0.500 & \textbf{0.090}* & \textbf{0.078} & 0.209 & 0.172 & 0.170 & 0.081 
 \\ 
 & 0.750 & \textbf{0.086}* & \textbf{0.062}* & 0.300 & 0.300 & 0.259 & 0.169 
 \\ 
 & 0.950 & \textbf{0.216}* & \textbf{0.050}* & 0.421 & 0.380 & 0.376 & 0.186 
 \\ 
\hline 
\multirow{5}{1\LL}{$\mathcal{N}$\\  \mbox{$(\Delta \mu =4)$}}  & 0.050 & \textbf{0.004}* & \textbf{0.004}* & 0.021 & 0.031 & 0.010 & 0.012 
 \\ 
 & 0.250 & \textbf{0.016}* & \textbf{0.005}* & 0.092 & 0.092 & 0.066 & 0.038 
 \\ 
 & 0.500 & \textbf{0.021}* & \textbf{0.006}* & 0.129 & 0.085 & 0.101 & 0.049 
 \\ 
 & 0.750 & \textbf{0.041}* & \textbf{0.013}* & 0.310 & 0.268 & 0.265 & 0.145 
 \\ 
 & 0.950 & \textbf{0.171}* & \textbf{0.015}* & 0.532 & 0.372 & 0.486 & 0.227 
 \\ 
\hline
\hline
 \multirow{5}{1\LL}{$\mathcal{L}$\\  \mbox{$(\Delta \mu =1)$}}  & 0.050 & 0.234 & 0.261 & \textbf{0.077}* & \textbf{0.070}* & 0.137 & 0.169 
 \\ 
 & 0.250 & 0.205 & 0.219 & 0.116 & 0.145 & \textbf{0.087}* & \textbf{0.100}* 
 \\ 
 & 0.500 & 0.174 & 0.167 & 0.233 & 0.326 & \textbf{0.132} & \textbf{0.069}* 
 \\ 
 & 0.750 & \textbf{0.224} & \textbf{0.087}* & 0.464 & 0.524 & 0.257 & 0.158 
 \\ 
 & 0.950 & \textbf{0.467} & \textbf{0.171}* & 0.695 & 0.726 & 0.498 & 0.286 
 \\ 
\hline 
\multirow{5}{1\LL}{$\mathcal{L}$\\ \mbox{$(\Delta \mu =2)$}}  & 0.050 & 0.080 & 0.071 & \textbf{0.028}* & 0.038 & 0.035 & \textbf{0.033} 
 \\ 
 & 0.250 & 0.086 & 0.080 & 0.111 & 0.171 & \textbf{0.052}* & \textbf{0.037}* 
 \\ 
 & 0.500 & \textbf{0.074}* & \textbf{0.068}* & 0.251 & 0.379 & 0.133 & 0.098 
 \\ 
 & 0.750 & \textbf{0.059}* & \textbf{0.050}* & 0.413 & 0.550 & 0.252 & 0.194 
 \\ 
 & 0.950 & \textbf{0.229}* & \textbf{0.040}* & 0.581 & 0.735 & 0.388 & 0.286 
 \\ 
\hline 
\multirow{5}{1\LL}{$\mathcal{L}$\\ \mbox{$(\Delta \mu =4)$}}  & 0.050 & \textbf{0.004}* & \textbf{0.004}* & 0.032 & 0.044 & 0.010 & 0.008 
 \\ 
 & 0.250 & \textbf{0.014}* & \textbf{0.009}* & 0.123 & 0.172 & 0.068 & 0.044 
 \\ 
 & 0.500 & \textbf{0.028}* & \textbf{0.014}* & 0.255 & 0.282 & 0.154 & 0.117 
 \\ 
 & 0.750 & \textbf{0.038}* & \textbf{0.009}* & 0.419 & 0.469 & 0.270 & 0.175 
 \\ 
 & 0.950 & \textbf{0.194}* & \textbf{0.014}* & 0.605 & 0.639 & 0.435 & 0.272 
 \\ 
\hline
\end{tabularx}
\label{tab:GL_pdf}
\normalsize
\end{table}

\begin{table}[t]
\centering
\caption{Mean absolute difference between estimated and true mixing proportion over a selection of true mixing proportions and the following data sets: $\mathcal{W}$ = waveform, and $\mathcal{B}$ = ball in the box. Statistical significance was evaluated by comparing the \AlgName\ method, the \pdfratio\ method, and the \cdfratio\ method. The bold font type indicates the winner and the asterisk indicates statistical significance.}
\footnotesize
\tracingtabularx
\begin{tabularx}{0.65\linewidth}
{|>{\setlength{\hsize}{0.110000\hsize}}X|
>{\setlength{\hsize}{0.10000\hsize}}X|
>{\setlength{\hsize}{0.131667\hsize}}X|
>{\setlength{\hsize}{0.131667\hsize}}X|
>{\setlength{\hsize}{0.131667\hsize}}X|
>{\setlength{\hsize}{0.131667\hsize}}X|
>{\setlength{\hsize}{0.131667\hsize}}X|
>{\setlength{\hsize}{0.131667\hsize}}X|
}\hline 
\multirow{2}{1\LL}{Data} &
& \multicolumn{2}{c|}{ \AlgName }
& \multicolumn{2}{c|}{ \pdfratio }
& \multicolumn{2}{c|}{ \cdfratio }
\\
 \cline{3-8}& $\alpha$ & 100 & 1000 & 100 & 1000 & 100 & 1000 \\
  \hline
  \hline

\multirow{5}{1\LL}{$\mathcal{B}$}  & 0.050 & \textbf{0.004}* & \textbf{0.004}* & 0.044 & 0.050 & 0.018 & 0.022 
 \\ 
 & 0.250 & \textbf{0.022}* & \textbf{0.027}* & 0.121 & 0.150 & 0.068 & 0.069 
 \\ 
 & 0.500 & \textbf{0.027}* & \textbf{0.014}* & 0.175 & 0.150 & 0.103 & 0.138 
 \\ 
 & 0.750 & \textbf{0.126} & \textbf{0.017}* & 0.312 & 0.198 & 0.160 & 0.127 
 \\ 
 & 0.950 & \textbf{0.392}* & \textbf{0.030}* & 0.700 & 0.300 & 0.634 & 0.161 
 \\ 
\hline
\multirow{5}{1\LL}{$\mathcal{W}$}  & 0.050 & \textbf{0.004}* & \textbf{0.004}* & 0.050 & 0.049 & 0.017 & 0.018 
 \\ 
 & 0.250 & \textbf{0.088} & \textbf{0.038}* & 0.211 & 0.192 & 0.097 & 0.056 
 \\ 
 & 0.500 & 0.259 & \textbf{0.050} & 0.394 & 0.198 & \textbf{0.214} & 0.057 
 \\ 
 & 0.750 & 0.379 & \textbf{0.052}* & 0.565 & 0.298 & \textbf{0.337} & 0.089 
 \\ 
 & 0.950 & \textbf{0.412}* & 0.261 & 0.758 & 0.520 & 0.653 & \textbf{0.206} 
 \\ 
\hline
\end{tabularx}
\normalsize
\label{tab:BW_pdf}
\end{table}

\begin{table}
\caption{Mean absolute difference between estimated and true mixing proportion over twelve data sets from the UCI Machine Learning Repository. Statistical significance was evaluated by comparing the \AlgName\ method, the \pdfratio\ method, and the \cdfratio\ method. The bold font type indicates the winner and the asterisk indicates statistical significance. For each data set, shown are the true mixing proportion ($\alpha$), the dimensionality of the sample ($d$), the number of positive examples ($n_1$), and the total number of examples ($n$).}
\footnotesize
\begin{centering}
\begin{tabular}{|c|c|c|c|c|c|c|c|}
\hline 
Data set & $\alpha$ & $d$ & $n_1$ & $n$ & \AlgName & \pdfratio & \cdfratio \\
\hline 
\hline 
Bank  & 0.095 & 13 & 5188 & 45000 & 0.105 & 0.069 & \textbf{0.029}*
 \\
\hline
Concrete  & 0.419 & 8 & 490 & 1030 & \textbf{0.118} & 0.216 & 0.138
 \\
\hline
Gas  & 0.342 & 127 & 2565 & 5574 & \textbf{0.018}* & 0.335 & 0.137
 \\
\hline
Housing  & 0.268 & 13 & 209 & 506 & \textbf{0.062}* & 0.141 & 0.086 
 \\
\hline
Landsat  & 0.093 & 36 & 1508 & 6435 & 0.032 & 0.077 & \textbf{0.018}* 
 \\
\hline
Mushroom  & 0.409 & 126 & 3916 & 8124 &  \textbf{0.016}* & 0.318 & 0.140 
 \\
\hline
Pageblock  & 0.086 & 10 & 560 & 5473 & \textbf{0.025}* & 0.085 & 0.039 
 \\
\hline
Pendigit  & 0.243 & 16 & 3430 & 10992 & \textbf{0.007}* & 0.173 & 0.056 
 \\
\hline
Pima  & 0.251 & 8 & 268 & 768 & 0.129 & 0.116 & \textbf{0.072}* 
 \\
\hline
Shuttle  & 0.139 & 9 & 8903 & 58000 & \textbf{0.023}* & 0.122 & 0.052 
 \\
\hline
Spambase  & 0.226 & 57 & 1813 & 4601 & 0.066 & 0.116 & \textbf{0.027}* 
 \\
\hline
Wine  & 0.566 & 11 & 4113 & 6497 & 0.122 & 0.275 & \textbf{0.084}* 
 \\
 \hline 
\end{tabular}
\normalsize
\par\end{centering}
\label{tab:RD_pdf}
\end{table}

\section{Conclusions}

This work was motivated by the problem of estimating the fraction of positive examples in unlabeled data given a sample of positive examples and a sample of unlabeled data. We formulate this estimation problem as parameter learning of two-component mixture models. In this general setting, we provide theoretical analysis of the identifiability conditions and use it to develop efficient algorithms for non-parametric estimation of mixing proportions. In addition, we address the problem of density estimation in high-dimensional samples by developing class prior-preserving transformations that map the original multivariate data to univariate samples. We applied our algorithms on univariate and multivariate samples and compared them favorably with state-of-the-art supervised and unsupervised procedures.

There are several possibilities for extending this work that could lead to further improvements in the quality of estimates. These include optimization regularizers that would favor larger $\alpha$, more sophisticated kernel-density estimation, a development of better heuristics for the detection of the inflection point (e.g.~via derivative estimation) in the log-likelihood plots, as well as development of estimates of the reliability of the identified mixing proportion. Finally, extensions of this methodology to noisy and biased data will be subject of our future work.

\acks{This work was partially supported by the National Science Foundation award DBI-0644017 and National Institutes of Health award R01MH105524.}


\bibliography{refdb,additional}

\begin{thebibliography}{44}
\providecommand{\natexlab}[1]{#1}
\providecommand{\url}[1]{\texttt{#1}}
\expandafter\ifx\csname urlstyle\endcsname\relax
  \providecommand{\doi}[1]{doi: #1}\else
  \providecommand{\doi}{doi: \begingroup \urlstyle{rm}\Url}\fi

\bibitem[Beana et~al.(2013)Beana, Dimarcoa, Mercer, Thayer, Roya, and
  Ghosal]{Beana2013}
G.~J. Beana, E.~A. Dimarcoa, L.~D. Mercer, L.~K. Thayer, A.~Roya, and
  S.~Ghosal.
\newblock Finite skew-mixture models for estimation of positive false discovery
  rates.
\newblock \emph{Stat Methodol}, 10:\penalty0 46--57, 2013.

\bibitem[Blanchard et~al.(2010)Blanchard, Lee, and Scott]{Blanchard2010}
G.~Blanchard, G.~Lee, and C.~Scott.
\newblock Semi-supervised novelty detection.
\newblock \emph{J Mach Learn Res}, 11:\penalty0 2973--3009, 2010.

\bibitem[Breiman et~al.(1984)Breiman, Friedman, Olshen, and Stone]{Breiman1984}
L.~Breiman, J.~H. Friedman, R.~A. Olshen, and C.~J. Stone.
\newblock \emph{Classification and regression trees}.
\newblock Wadsworth International Group, Belmont, CA, 1984.

\bibitem[Chawla et~al.(2004)Chawla, Japkowicz, and Kotcz]{Chawla2004}
N.~V. Chawla, N.~Japkowicz, and A.~Kotcz.
\newblock Editorial: special issue on learning from imbalanced data sets.
\newblock \emph{ACM SIGKDD Explorations Newsletter}, 6\penalty0 (1):\penalty0
  1--6, 2004.

\bibitem[Cooley and MacEachern(1998)]{cooley1998classification}
C.~A. Cooley and S.~N. MacEachern.
\newblock Classification via kernel product estimators.
\newblock \emph{Biometrika}, 85\penalty0 (4):\penalty0 823--833, 1998.

\bibitem[Cortes et~al.(2008)Cortes, Mohri, Riley, and Rostamizadeh]{Cortes2008}
C.~Cortes, M.~Mohri, M.~Riley, and A.~Rostamizadeh.
\newblock Sample selection bias correction theory.
\newblock In \emph{Proceedings of the 19th International Conference on
  Algorithmic Learning Theory}, ALT 2008, pages 38--53. Springer, 2008.

\bibitem[Dempster et~al.(1977)Dempster, Laird, and Rubin]{Dempster1977}
A.~P. Dempster, N.~M. Laird, and D.~B. Rubin.
\newblock Maximum likelihood from data via the {EM} algorithm.
\newblock \emph{J R Statist Soc B}, 39\penalty0 (1):\penalty0 1--38, 1977.

\bibitem[Denis(1998)]{Denis1998}
F.~Denis.
\newblock {PAC} learning from positive statistical queries.
\newblock In \emph{Proceedings of the 9th International Conference on
  Algorithmic Learning Theory}, ALT 1998, pages 112--126, 1998.

\bibitem[Denis et~al.(2005)Denis, Gilleron, and Letouzey]{Denis2005}
F.~Denis, R.~Gilleron, and F.~Letouzey.
\newblock Learning from positive and unlabeled examples.
\newblock \emph{Theor Comput Sci}, 348\penalty0 (16):\penalty0 70--83, 2005.

\bibitem[Dessimoz et~al.(2013)Dessimoz, Skunca, and Thomas]{Dessimoz2013}
C.~Dessimoz, N.~Skunca, and P.~D. Thomas.
\newblock {CAFA} and the open world of protein function predictions.
\newblock \emph{Trends Genet}, 29\penalty0 (11):\penalty0 609--610, 2013.

\bibitem[du~Plessis and Sugiyama(2012)]{duPlessis2012}
M.~C. du~Plessis and M.~Sugiyama.
\newblock Semi-supervised learning of class balance under class-prior change by
  distribution matching.
\newblock In \emph{Proceedings of the 29th International Conference on Machine
  Learning}, ICML 2012, pages 823--830, 2012.

\bibitem[du~Plessis and Sugiyama(2014)]{duPlessis2014b}
M.~C. du~Plessis and M.~Sugiyama.
\newblock Class prior estimation from positive and unlabeled data.
\newblock \emph{IEICE Transactions on Information and Systems}, E97-D\penalty0
  (5):\penalty0 1358--1362, 2014.

\bibitem[du~Plessis et~al.(2014)du~Plessis, Niu, and Sugiyama]{duPlessis2014}
M.~C. du~Plessis, G.~Niu, and M.~Sugiyama.
\newblock Analysis of learning from positive and unlabeled data.
\newblock In \emph{Advances in Neural Information Processing Systems}, NIPS
  2014, pages 703--711. Curran Associates, Inc., 2014.

\bibitem[Elkan(2001)]{Elkan2001}
C.~Elkan.
\newblock The foundations of cost-sensitive learning.
\newblock In \emph{Proceedings of the 17th International Joint Conference on
  Artificial Intelligence}, IJCAI 2001, pages 973--978, 2001.

\bibitem[Elkan and Noto(2008)]{Elkan2008}
C.~Elkan and K.~Noto.
\newblock Learning classifiers from only positive and unlabeled data.
\newblock In \emph{Proceedings of the 14th ACM SIGKDD International Conference
  on Knowledge Discovery and Data Mining}, KDD 2008, pages 213--220, New York,
  NY, USA, 2008. ACM.

\bibitem[Geurts(2011)]{Geurts2011}
P.~Geurts.
\newblock Learning from positive and unlabeled examples by enforcing
  statistical significance.
\newblock In \emph{Proceedings of the 14th International Conference on
  Artificial Intelligence and Statistics}, AISTATS 2011, pages 305--314, 2011.

\bibitem[Ghosal and Roy(2011)]{Ghosal2011}
S.~Ghosal and A.~Roy.
\newblock Identifiability of the proportion of null hypotheses in skew-mixture
  models for the p-value distribution.
\newblock \emph{Electron J Statist}, 5:\penalty0 329--341, 2011.

\bibitem[Hastie et~al.(2001)Hastie, Tibshirani, and Friedman]{Hastie2001}
T.~Hastie, R.~Tibshirani, and J.~H. Friedman.
\newblock \emph{The elements of statistical learning: data mining, inference,
  and prediction}.
\newblock Springer Verlag, New York, NY, 2001.

\bibitem[Heckman(1979)]{Heckman1979}
J.~Heckman.
\newblock Sample selection bias as a specification error.
\newblock \emph{Econometrica}, 47:\penalty0 153--161, 1979.

\bibitem[Latinne et~al.(2001)Latinne, Saerens, and Decaestecker]{Latinne2001}
P.~Latinne, M.~Saerens, and C.~Decaestecker.
\newblock Adjusting the outputs of a classifier to new a priori probabilities
  may significantly improve classification accuracy: evidence from a
  multi-class problem in remote sensing.
\newblock In C.~E. Brodley and A.~P. Danyluk, editors, \emph{Proceedings of the
  18th International Conference on Machine Learning}, ICML 2001, pages
  298--305. Morgan Kaufmann, 2001.

\bibitem[Lee and Liu(2003)]{Lee2003}
W.~S. Lee and B.~Liu.
\newblock Learning with positive and unlabeled examples using weighted logistic
  regression.
\newblock In \emph{Proceedings of the 20th International Conference on Machine
  Learning}, ICML 2003, pages 448--455, 2003.

\bibitem[Lichman(2013)]{Lichman2013}
M.~Lichman.
\newblock {UCI Machine Learning Repository}, 2013.
\newblock URL \url{http://archive.ics.uci.edu/ml}.

\bibitem[Liu et~al.(2002)Liu, Lee, Yu, and Li]{Liu2002}
B.~Liu, W.~S. Lee, P.~S. Yu, and X.~Li.
\newblock Partially supervised classification of text documents.
\newblock In \emph{Proceedings of the 19th International Conference on Machine
  Learning}, ICML 2002, pages 387--394, 2002.

\bibitem[Liu et~al.(2003)Liu, Dai, Li, Lee, and Yu]{Liu2003}
B.~Liu, Y.~Dai, X.~Li, W.S. Lee, and P.~S. Yu.
\newblock Building text classifiers using positive and unlabeled examples.
\newblock In \emph{Proceedings of the 3rd IEEE International Conference on Data
  Mining}, ICDM 2003, pages 179--186, 2003.

\bibitem[Liu et~al.(2007)Liu, Lafferty, and Wasserman]{liu2007sparse}
H.~Liu, J.~D. Lafferty, and L.~A. Wasserman.
\newblock Sparse nonparametric density estimation in high dimensions using the
  rodeo.
\newblock In \emph{Proceedings of the 11th International Conference on
  Artificial Intelligence and Statistics}, AISTATS 2007, pages 283--290, 2007.

\bibitem[Manevitz and Yousef(2001)]{Manevitz2001}
L.~M. Manevitz and M.~Yousef.
\newblock One-class {SVMs} for document classification.
\newblock \emph{J Mach Learn Res}, 2:\penalty0 139--154, 2001.

\bibitem[McLachlan and Peel(2000)]{McLachlan2000}
G.~J. McLachlan and D.~Peel.
\newblock \emph{Finite mixture models}.
\newblock John Wiley \& Sons, Inc., New York, NY 10158, USA, 2000.

\bibitem[Park and Sandberg(1991)]{Park1991}
J.~Park and I.~W. Sandberg.
\newblock Universal approximation using radial-basis-function networks.
\newblock \emph{Neural Comput}, 3\penalty0 (2):\penalty0 246--257, 1991.

\bibitem[Pearson(1900)]{Pearson1900}
K.~Pearson.
\newblock On the criterion that a given system of deviations from the probable
  in the case of the correlated systems of variables is such that it can be
  reasonably supposed to have arisen from random sampling.
\newblock \emph{Philos Magazine 5th Series}, 50\penalty0 (302):\penalty0
  157--175, 1900.

\bibitem[Phillips et~al.(2009)Phillips, Dudik, Elith, Graham, Lehmann,
  Leathwick, and Ferrier]{Phillips2009}
S.~J. Phillips, M.~Dudik, J.~Elith, C.~H. Graham, A.~Lehmann, J.~Leathwick, and
  S.~Ferrier.
\newblock Sample selection bias and presence-only distribution models:
  implications for background and pseudo-absence data.
\newblock \emph{Ecol Appl}, 19\penalty0 (1):\penalty0 181--197, 2009.

\bibitem[Platt(1999)]{Platt1999}
J.~C. Platt.
\newblock \emph{Probabilistic outputs for support vector machines and
  comparison to regularized likelihood methods}, pages 61--74.
\newblock MIT Press, 1999.

\bibitem[Saerens et~al.(2002)Saerens, Latinne, and Decaestecker]{Saerens2002}
M.~Saerens, P.~Latinne, and C.~Decaestecker.
\newblock Adjusting the outputs of a classifier to new a priori probabilities:
  a simple procedure.
\newblock \emph{Neural Comput}, 14:\penalty0 21--41, 2002.

\bibitem[Scott and Blanchard(2009)]{Scott2009}
C.~Scott and G.~Blanchard.
\newblock Novelty detection: unlabeled data definitely help.
\newblock In \emph{Proceedings of the 12th International Conference on
  Artificial Intelligence and Statistics}, AISTATS 2009, pages 464--471, 2009.

\bibitem[Scott and Nowak(2005)]{Scott2005}
C.~Scott and R.~Nowak.
\newblock A {Neyman-Pearson} approach to statistical learning.
\newblock \emph{IEEE Trans Inf Theory}, 51\penalty0 (11):\penalty0 3806--3819,
  2005.

\bibitem[Scott et~al.(2013)Scott, Blanchard, and Handy]{Scott2013}
C.~Scott, G.~Blanchard, and G.~Handy.
\newblock Classification with asymmetric label noise: consistency and maximal
  denoising.
\newblock \emph{J Mach Learn Res W\&CP}, 30:\penalty0 489--511, 2013.

\bibitem[Scott(1979)]{Scott1979}
D.~W. Scott.
\newblock On optimal and data-based histograms.
\newblock \emph{Biometrika}, 66\penalty0 (3):\penalty0 605--610, 1979.

\bibitem[Scott(2008)]{scott2008curse}
D.~W. Scott.
\newblock The curse of dimensionality and dimension reduction.
\newblock \emph{Multivariate Density Estimation: Theory, Practice, and
  Visualization}, pages 195--217, 2008.

\bibitem[Storey and Tibshirani(2003)]{Storey2003}
J.~D. Storey and R.~Tibshirani.
\newblock Statistical significance for genomewide studies.
\newblock \emph{Proc Natl Acad Sci U S A}, 100\penalty0 (16):\penalty0
  9440--9445, 2003.

\bibitem[Tallis and Chesson(1982)]{Tallis1982}
G.~M. Tallis and P.~Chesson.
\newblock Identifiability of mixtures.
\newblock \emph{J Austral Math Soc Ser A}, 32:\penalty0 339--348, 1982.

\bibitem[Vucetic and Obradovic(2001)]{Vucetic2001}
S.~Vucetic and Z.~Obradovic.
\newblock Classification on data with biased class distribution.
\newblock In \emph{Proceedings of the 12th European Conference on Machine
  Learning}, ECML 2001, pages 527--538, 2001.

\bibitem[Ward et~al.(2009)Ward, Hastie, Barry, Elith, and Leathwick]{Ward2009}
G.~Ward, T.~Hastie, S.~Barry, J.~Elith, and J.R. Leathwick.
\newblock Presence-only data and the {EM} algorithm.
\newblock \emph{Biometrics}, 65\penalty0 (2):\penalty0 554--563, 2009.

\bibitem[Yakowitz and Spragins(1968)]{Yakowitz1968}
S.~J. Yakowitz and J.~D. Spragins.
\newblock On the identifiability of finite mixtures.
\newblock \emph{Ann Math Statist}, 39:\penalty0 209--214, 1968.

\bibitem[Yu et~al.(2004)Yu, Han, and Chang]{Yu2004}
H.~Yu, J.~Han, and K.~C.~C. Chang.
\newblock {PEBL}: web page classification without negative examples.
\newblock \emph{IEEE Trans Knowl Data Eng}, 16\penalty0 (1):\penalty0 70--81,
  2004.

\bibitem[Zhang and Lee(2005)]{Zhang2005}
D.~Zhang and W.~S. Lee.
\newblock A simple probabilistic approach to learning from positive and
  unlabeled examples.
\newblock In \emph{Proceedings of the 5th Annual UK Workshop on Computational
  Intelligence}, UKCI 2005, pages 83--87, 2005.

\end{thebibliography}

\end{document}